\documentclass[11pt,letterpaper]{article}
\usepackage[margin=1in]{geometry}
\IfFileExists{arxiv_submission/math_commands.tex}{\def\arxivpath{arxiv_submission/}}{\def\arxivpath{}}
\makeatletter
\edef\input@path{{\arxivpath}}
\makeatother
\PassOptionsToPackage{dvipsnames}{xcolor}

\usepackage{amsmath,amsfonts,bm}

\def\1{\bm{1}}

\DeclareMathAlphabet{\mathsfit}{\encodingdefault}{\sfdefault}{m}{sl}
\SetMathAlphabet{\mathsfit}{bold}{\encodingdefault}{\sfdefault}{bx}{n}

\newcommand{\E}{\mathbb{E}}

\newcommand{\R}{\mathbb{R}}

\newcommand{\dd}{\, {\rm d}}
\newcommand{\defeq}{\coloneqq}
\newcommand{\cut}[1]{}

\newcommand{\namelong}{\textsc{Kinematic Interpolants}\xspace}
\newcommand{\nameshort}{\textsc{KI}\xspace}

\usepackage[framemethod=TikZ]{mdframed}
\mdfdefinestyle{MyFrame}{linecolor=black,
    outerlinewidth=.3pt,
    roundcorner=5pt,
    innertopmargin=1pt, innerbottommargin=1pt, innerrightmargin=1pt,
    innerleftmargin=1pt,
    backgroundcolor=black!0!white}

\mdfdefinestyle{MyFrame2}{linecolor=white,
    outerlinewidth=1pt,
    roundcorner=1pt,
    innertopmargin=5pt,
    innerbottommargin=5pt,
    innerrightmargin=10pt,
    innerleftmargin=10pt,
    backgroundcolor=black!3!white}

\mdfdefinestyle{MyFrameEq}{linecolor=white,
    outerlinewidth=0pt,
    roundcorner=0pt,
    innertopmargin=0pt,
    innerbottommargin=0pt,
    innerrightmargin=7pt,
    innerleftmargin=7pt,
    backgroundcolor=black!3!white}

\mdfdefinestyle{nfmdefinitionstyle}{
    backgroundcolor=black!3,
    linecolor=black!35,
    linewidth=0.5pt,
    roundcorner=0pt,
    innerleftmargin=9pt,
    innerrightmargin=9pt,
    innertopmargin=6pt,
    innerbottommargin=6pt,
    skipabove=10pt,
    skipbelow=10pt
}

\usepackage{graphicx}
\graphicspath{{\arxivpath}}

\usepackage{hyperref}
\usepackage{url}

\usepackage[utf8]{inputenc} \usepackage[T1]{fontenc}    \usepackage{booktabs}       \usepackage{threeparttable}
\usepackage{amsfonts}       \usepackage{nicefrac}       \usepackage{microtype}      \usepackage[dvipsnames]{xcolor}

\usepackage{enumitem}
\usepackage{mathtools}
\usepackage{todonotes}
\usepackage{amssymb,fge}
\usepackage{wrapfig}
\usepackage{bbm}
\usepackage{mathrsfs}
\usepackage{soul}
\usepackage{array}
\usepackage{multirow}
\usepackage{tikz}
\usepackage{subcaption}
\usetikzlibrary{positioning,arrows.meta,shapes.geometric}
\usepackage[round]{natbib}
\usepackage{listings}
\usepackage{xcolor}         \usepackage{colortbl}
\usepackage{xspace}
\usepackage{lineno}
\usepackage{titletoc}

\usepackage{amsmath, amsthm, mathtools}
\usepackage{cleveref}
\usepackage[normalem]{ulem}
\usepackage{changepage}
\usepackage{float}
\usepackage{needspace} \usepackage{makecell}
\usepackage{pifont}

  \definecolor{sectiongray}{gray}{0.90}
\definecolor{refgray}{gray}{0.55}
\usepackage[ruled,vlined,linesnumbered]{algorithm2e}

\newtheorem{theorem}{Theorem}
\newtheorem{lemma}[theorem]{Lemma}

\newtheorem{proposition}[theorem]{Proposition}

\theoremstyle{definition}
\newmdtheoremenv[
    style=nfmdefinitionstyle
]{nfmdefinition}{Definition}
\theoremstyle{plain}

\newtheoremstyle{nfmremarktext}
    {5pt}{0pt}
    {\normalfont}{}
    {\bfseries}{.}{0.5em}{}

\theoremstyle{nfmremarktext}
\newmdtheoremenv[
    backgroundcolor=white,
    linecolor=black!25,
    linewidth=0.4pt,
    roundcorner=0pt,
    topline=true,
    bottomline=true,
    leftline=true,
    rightline=true,
    innerleftmargin=9pt,
    innerrightmargin=9pt,
    innertopmargin=6pt,
    innerbottommargin=6pt,
    skipabove=8pt,
    skipbelow=8pt
]{nfmremark}{Remark}
\theoremstyle{plain}

\hypersetup{
    pdftitle={Learning smooth dynamics via Kinematic Interpolants},
    pdfauthor={Yiheng Roger Jiang; Youssef Marzouk; Joey Bose; Aram-Alexandre Pooladian},
	colorlinks=true,       linkcolor=RoyalBlue,        citecolor=RoyalBlue,        filecolor=magenta,     urlcolor=RoyalBlue
}

\newcommand{\vscore}[2]{\ensuremath{#1\!\pm\!#2}}
\newcommand{\vbest}[2]{\ensuremath{\boldsymbol{#1\!\pm\!#2}}}

\newcommand{\OurTtwo}{\vscore{0.0691}{0.0039}}
\newcommand{\OurTfour}{\vscore{0.0224}{0.0006}}
\newcommand{\OurTsix}{\vscore{0.0278}{0.0017}}
\newcommand{\OurTeight}{\vscore{0.0639}{0.0019}}
\newcommand{\OurFromFirst}{\vbest{0.1552}{0.0167}}
\newcommand{\OurFromLast}{\vbest{0.0281}{0.0066}}
\newcommand{\OurIndependentTtwo}{\vscore{0.0656}{0.0040}}
\newcommand{\OurIndependentTfour}{\vbest{0.0217}{0.0008}}
\newcommand{\OurIndependentTsix}{\vbest{0.0266}{0.0029}}
\newcommand{\OurIndependentTeight}{\vscore{0.0635}{0.0035}}
\newcommand{\OurIndependentFromFirst}{\vscore{0.1685}{0.0462}}
\newcommand{\OurIndependentFromLast}{\vscore{0.0720}{0.0132}}

\begin{document}
\title{Learning smooth dynamics via Kinematic Interpolants}
\author{{Yiheng (Roger) Jiang}\thanks{Massachusetts Institute of Technology.
\texttt{rogeryj@mit.edu}, \texttt{ymarz@mit.edu}}\qquad
{Youssef Marzouk}\footnotemark[1]\\[0.5em]
{Avishek Joey Bose}\thanks{Imperial College London and Mila -- Qu\'ebec AI Institute.
\texttt{joey.bose@imperial.ac.uk}}\qquad
{Aram-Alexandre Pooladian}\thanks{Institute for Foundations of Data Science, Yale University.
\texttt{aram-alexandre.pooladian@yale.edu}}}
\date{}
\maketitle
\begin{abstract}
\looseness=-1
Inferring dynamics from real-world experimental data is inherently ambiguous, as many plausible dynamical systems can explain the same positional observations at measurement times. For higher fidelity modeling of applications ranging from single-cell RNA to vortex dynamics and even flocking of birds, scientific observables increasingly also include \emph{velocity} estimates that serve to resolve this issue. However, incorporating this rich information in a scalable manner, free of simulation or unnecessary approximations, has remained a difficult challenge. In this paper, we introduce \namelong (\nameshort), a novel framework that extends the seminal methodology of stochastic interpolants to infer nonlinear dynamics across multiple snapshots in phase space, i.e., observables with position and velocity measurements. Empirically, we demonstrate that across several scientific datasets, \nameshort achieves superior reconstruction of the underlying dynamics compared to existing baselines while requiring only a fraction of the training time.

\end{abstract}
\section{Introduction}

\looseness=-1
In many scientific applications, we want to model a complex underlying process that produces potentially high-throughput measurements over time. Such experimental observables often capture evolving population dynamics through multiple irregular snapshots in time; and thus, observations do not follow individual members of the governing scientific process. For instance, single-cell RNA (scRNA) sequencing is inherently \emph{destructive}, where the measurement process destroys the cells; this implies that the same cell cannot be tracked from one time snapshot to the next. As a result, experimental observables are curated in the form of unpaired samples from the population at different times. The underlying task is then to faithfully recover the underlying continuous dynamics that are consistent with the available population measurements, commonly known as the \emph{trajectory inference} problem~\citep{schiebinger2019optimal,tong2020trajectorynet,lavenant2024toward}.

\looseness=-1
The very nature of population measurements means that simply observing positions of particles is not sufficient to fully disambiguate the governing process. Rotationally symmetric dynamics around a fixed center are a canonical example of this phenomenon, which can yield identical snapshot measurements from the perspective of positions alone. Increasingly, experimental observations now also record the infinitesimal evolution of populations, in conjunction with standard measurements of positions. Notable examples include RNA velocity~\citep{lamanno2018rna}, which estimates the direction in which each cell's gene expression is changing; drifting buoys in open water measuring surface velocities of ocean currents~\citep{elipot2016global}; and three-dimensional tracking
of bird flocks that gives the velocity of each tracked bird~\citep{ling2019costs,cavagna2022marginal}. These additional velocity measurements help to remove much of the ambiguity of the original trajectory inference problem by providing additional constraints on the reconstruction. However, this rich signal is only exploitable for probabilistic models that can represent it.

\looseness=-1
Existing approaches seek to incorporate velocity information through different modeling choices. To learn non-gradient field dynamics, Curly-FM~\citep{petrovic2025curlyflowmatchinglearning} proposes to first construct a reference drift from approximate velocity estimates, learn an interpolant that follows it, and then fit a (first-order) flow. Wasserstein Lagrangian Mechanics (WLM)~\citep{guan2026lagrangianactionlearningpopulation} learns a population-level potential governing second-order population dynamics, but training requires expensive trajectory-based simulation. More recently, Acceleration Matching~\citep{dazzini2026trajectory} and TracingFlow~\citep{sun2026tracingflowsimulationfreetrajectoryinference} learn acceleration fields in a \emph{simulation-free} manner, but are restricted to first algorithmically imputing the missing velocity data for each observation before training.  We argue that readily available position-velocity data calls for \emph{probabilistic modeling} of second-order dynamics that can still benefit from simulation-free training algorithms.

\paragraph{Contributions.}
In this work, we introduce \namelong (\nameshort), an extension of stochastic interpolants~\citep{lipman2023flowmatchinggenerativemodeling,albergo2025stochasticinterpolantsunifyingframework} for learning dynamics from multiple snapshots of position-velocity data, where the learned field sets the acceleration across time. \nameshort designs an interpolant on phase space such that the velocity is defined to be the derivative of the position state, ensuring that both position and velocity are consistent by construction.\looseness=-1

Our main contributions are as follows:

\begin{enumerate}[topsep=0pt, partopsep=0pt, itemsep=0pt, parsep=0pt, leftmargin=*]
\item \looseness=-1 We show that conditioning the pathwise acceleration on the current state gives a field whose dynamics preserve the interpolant's one-time marginals under the assumptions of Proposition~\ref{prop:kinetic_transport} (Section~\ref{sec:steering_dynamics}).
\item \looseness=-1 We obtain closed-form acceleration targets by differentiating the interpolant's velocity component, enabling scalable, simulation-free regression training of an acceleration network without building a reference field or first learning an interpolant (Section~\ref{sec:sim_free_training}).
\item \looseness=-1 We develop a flexible family of interpolants with different noise schedules and support both deterministic and stochastic sampling. For stochastic sampling, we learn the velocity score to obtain the drift correction required for adjustable diffusion schedules (Sections~\ref{sec:path_choices} and~\ref{sec:learning_stochastic}).
\item \looseness=-1 Across several scientific datasets, we demonstrate that \nameshort improves reconstruction accuracy at substantially lower training cost than existing baselines (Section~\ref{sec:experiments}).
\end{enumerate}

\section{Background}\label{sec:phase_space_background}
Throughout, we assume all quantities are sufficiently smooth for the calculations below, and all displayed expectations and integrals are finite. We write ${\cal P}(\R^d)$ to be the space of {probability densities} over $\R^d$. For $P_1,\ldots,P_M \in {\cal P}(\R^d)$, we write $\Gamma(P_1,\ldots,P_M)$ to be the set of (multimarginal) couplings between the probability measures. For $\pi \in \Gamma(P_1,\ldots,P_M)$, we say $\pi_{\ell,k} \in \Gamma(P_\ell,P_k)$ for $\ell,k\in[M]$. For a time-dependent function $f(t)$, we write $\partial_t f(t) \equiv \dot{f}(t)$. To alleviate notational burden, we will often write $(x,v)$ to mean $(x_t,v_t)$ when convenient.

\looseness=-1
\paragraph{Flows on phase space.}
Phase space is defined as the joint space of positions and velocities, which in our case is simply denoted by $\R^d\times \R^d \cong \R^{2d}$. If we now want to evolve $Z_0 = (X_0,V_0) \sim \rho_0 \in {\cal P}(\R^{2d})$, then the flow must be given by $b_t : \R^{2d} \to \R^{2d}$ with $b_t=(b_t^x,b_t^v)$ and
\begin{equation*}
\begin{aligned}
\dot X_t=b_t^x(X_t,V_t)\,,\quad \dot V_t=b_t^v(X_t,V_t)\,,
\end{aligned}
\end{equation*}
where the density $\rho_t\defeq {\rm Law}((X_t,V_t))$ evolves according to the continuity equation on the joint space
\begin{equation*}
\partial_t\rho_t + \nabla\cdot(b_t\rho_t) =
\partial_t\rho_t
+\nabla_x\cdot(b_t^x \rho_t)
+\nabla_v\cdot(b_t^v \rho_t)=0\,.
\end{equation*}

To have the dynamics resemble a physical system (otherwise, a priori, the velocity and position need not interact), we restrict $b_t^x(x,v)=v$, and, writing $b_t^v=a_t$, we obtain
\begin{equation*}
\dot X_t=V_t\,,\qquad
\dot V_t=a_t(X_t,V_t)\,,
\end{equation*}
or, equivalently,
$\ddot X_t=a_t(X_t,\dot X_t)$, and consequently, the \emph{kinetic} continuity equation
\begin{equation}
\label{eq:background_kinetic_continuity}
\partial_t \rho_t
+\nabla_x\cdot(v\rho_t)
+\nabla_v\cdot(a_t \rho_t)=0\,.
\end{equation}

\looseness=-1
\paragraph{Diffusions on phase space.}
\label{sec:background_diffusions_scores}
To have stochastic sample paths on phase space, it is standard to consider the following dynamics
\begin{equation}
\label{eq:background_kinetic_diffusion}
\dd X_t=V_t\dd t\,,\quad \dd V_t=c_t(X_t,V_t)\dd t+\sqrt{\beta_t}\dd B_t\,,
\end{equation}
where $B_t$ is a Brownian motion on $\R^d$, $\beta_t$ is a non-negative diffusion parameter, and we replace the acceleration field by $c_t$. The law of the joint variables then follows the \emph{kinetic} Fokker--Planck equation\looseness-1
\begin{equation}
\label{eq:background_kinetic_fokker_planck}
\begin{aligned}
\partial_t\rho_t+\nabla_x\cdot(v\rho_t)+\nabla_v\cdot(c_t\rho_t)=\frac{\beta_t}{2}\Delta_v\rho_t\,,
\end{aligned}
\end{equation}
where $\Delta_v$ is the Laplacian in the velocity variable. To have the $t$-marginals of \eqref{eq:background_kinetic_fokker_planck} be the same as \eqref{eq:background_kinetic_continuity}, choose
\begin{align}\label{eq:background_score_corrected_drift}
\begin{split}
c_t(x,v)=a_t(x,v)+\frac{\beta_t}{2}s_t(x,v)  \defeq a_t(x,v)+\frac{\beta_t}{2}\nabla_v \log \rho_t(x,v)\,,
\end{split}
\end{align}
where the second term above, the \emph{velocity-score}, cancels with the diffusion term.

\section{Kinematic interpolants}
\label{sec:phase_space_si}
\looseness=-1
{We consider observations collected at irregular times $0=t_0 < t_1 < \cdots < t_J=1$. At each observation time $t_i$, we have samples from a joint position--velocity distribution:
\begin{align*}
Z_{t_i} = \bigl(X_{t_i},V_{t_i}\bigr) \sim \mu_{t_i} \in {\cal P}(\R^{2d})\,.
\end{align*}
While a single datum pairs positions and velocities {within} a snapshot, the identities of the individual particles are unavailable across snapshots. We therefore assume the underlying population consists of independent particles with continuous trajectories, and our goal is to learn a process $Z_t^\theta=(X_t^\theta,V_t^\theta)$ with laws
\begin{align*}
  \mu^{\theta}_t = \operatorname{Law}\bigl(X^{\theta}_t,V^{\theta}_t\bigr)\,, \quad t\in[0,1]\,.
\end{align*}
Our approach will be to minimize the discrepancy between $\mu_{t_i}$ and $\mu^\theta_{t_i}$ at all observation times. The learned process then generates joint position--velocity samples at intermediate times. To construct the training targets, we introduce an interpolant with law $\rho_t=\operatorname{Law}(X_t,V_t)$ satisfying $\rho_{t_i}=\mu_{t_i}$ at each observation time.}

\subsection{Kinematically consistent paths}
\label{sec:consistent_paths}
Let $\pi \in \Gamma(\mu_{t_0}, \ldots, \mu_{t_J})$ be a fixed coupling of the phase space marginals. We now choose functions
\begin{align*}
    I_t:\mathbb R^{2d}\times\mathbb R^{2d}
    \to\mathbb R^d\,,\quad G_t\in \mathbb R^{d\times 2d}
\end{align*}
and construct phase space interpolants as follows: for an arbitrary index $j \in \{0,\ldots,J-1\}$ and $t\in[t_j,t_{j+1}]$, draw $(Z_{t_j},Z_{t_{j+1}}) \sim \pi_{j,j+1}$ and define
\begin{equation}
\label{eq:phase_si_position}
    X_t
    =
    I_t(Z_{t_j},Z_{t_{j+1}})
    +
    G_t\xi_j\,,
\end{equation}
where $\xi_j\sim\mathcal N(0,I_{2d})$. Conceptually, $I_t$ defines the interpolation path constrained by the endpoints $Z_{t_j}$ and $Z_{t_{j+1}}$, and the second term adds noise along the path. The velocity of said path is then
\begin{align}
    V_t&:=\dot X_t
    =\partial_t I_t(Z_{t_j},Z_{t_{j+1}})+\dot G_t\xi_j\,,\label{eq:phase_si_velocity}
\end{align}
which results in the random variable for $t \in (t_{j},t_{j+1})$
\begin{align}\label{eq:zt_main}
Z_t &= \begin{pmatrix}
    X_t \\
    V_t
\end{pmatrix}
=
\begin{pmatrix}
    {I_t(Z_{t_j},Z_{t_{j+1}})} + G_t \xi_j\\
    {\partial_t I_t(Z_{t_j},Z_{t_{j+1}})} + \dot G_t \xi_j
\end{pmatrix}\,.
\end{align}

Conditioned on the endpoints (which are available to us), $Z_t$ is simply a Gaussian random variable in $\R^{2d}$ with mean and covariance given by
\begin{align}
m_t &\defeq\begin{pmatrix}
    I_t(Z_{t_j},Z_{t_{j+1}}) \\
    \partial_t I_t(Z_{t_j}, Z_{t_{j+1}})
\end{pmatrix}\,,\label{eq:mean_eq}\\
    \Sigma_t &\defeq \begin{pmatrix}
G_tG_t^\top&G_t\dot G_t^\top\\
\dot G_tG_t^\top&\dot G_t\dot G_t^\top
\end{pmatrix}\,.\label{eq:matrix_noise_covariance}
\end{align}

Our choice of interpolant is then only valid (meaning, satisfies the endpoint constraints) under the following suitable restrictions for the mean
{
\begin{align}
    I_{t_j} = X_{t_j}\,,\,\,\,& I_{t_{j+1}} = X_{t_{j+1}}\,,\label{eq:mean_boundary_conditions_1}\\
    \partial_t I_{t_j} = V_{t_j}\,,\,\,\,& \partial_t I_{t_{j+1}}= V_{t_{j+1}}\label{eq:mean_boundary_conditions_2}\,,
\end{align}
and the following for the covariance:
\begin{equation}
\label{eq:noise_boundary_conditions}
    (G_{t_j},\dot G_{t_j}) = (G_{t_{j+1}},\dot G_{t_{j+1}}) = 0\,.
\end{equation}
}
\subsection{Learning dynamics via accelerations}
\label{sec:steering_dynamics}
The preceding section relies on the fact that our observations are in the form of position-velocity pairs, so we can impose the constraints in~\eqref{eq:mean_boundary_conditions_1} and~\eqref{eq:mean_boundary_conditions_2}. Now, to learn the relevant dynamics to match these constraints, we incorporate a final missing ingredient, namely (conditional) accelerations of the chosen path:\looseness-1
\begin{align}\label{eq:phase_si_acceleration}
        A_t^{(j)}&:=\dot V_t
    =\partial_{t}^2 I_t(Z_{t_j},Z_{t_{j+1}})+\ddot G_t\xi_j\,,
\end{align}
which follows by differentiating \eqref{eq:phase_si_velocity} in time. To obtain an acceleration field depending only on the current time, position, and velocity, we take the conditional mean:
\begin{equation}
\label{eq:markovian_acceleration}
\mathbb E\left[A_t^{(j)}\mid (X_t,V_t)=(x,v)\right]\,.
\end{equation}
Here the expectation averages over the sampled endpoints and noise conditional on the interpolant passing through $(x,v)$ at time $t$. Proposition~\ref{prop:kinetic_transport} shows that using this field in the second-order dynamics reproduces the interpolant's joint position--velocity distribution at every time, including the prescribed observation marginals.
\begin{mdframed}[style=MyFrame2]
\begin{proposition}
\label{prop:kinetic_transport}
Given any multimarginal coupling $\pi\in\Gamma(\mu_{t_0},\ldots,\mu_{t_J})$, construct $(X_t,V_t)$ as in Section~\ref{sec:consistent_paths}, and choose
\begin{align*}
    \!\!\!\!a_t^M(x,v) = \sum_{j=0}^{J-1}\E\left[A_t^{(j)}\mid (X_t,V_t)=(x,v)\right]\,.
\end{align*}
Then the dynamics
\begin{equation}
\label{eq:markovian_ode}
\dot X_t^M=V_t^M\,,\qquad
\dot V_t^M=a_t^M(X_t^M,V_t^M)\,,
\end{equation}
initialized with $(X_0^M,V_0^M)\sim\mu_{t_0}$, have the same marginals as the interpolant: for every $t\in[0,1]$,
\begin{equation}
\label{eq:markovian_marginal_matching}
\operatorname{Law}(X_t^M,V_t^M)=\rho_t\,.
\end{equation}
In particular, they pass through each prescribed marginal $\mu_{t_j}$ at time $t_j$.
\end{proposition}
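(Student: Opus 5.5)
The plan is to show that $\rho_t$ solves the kinetic continuity equation \eqref{eq:background_kinetic_continuity} with acceleration $a_t^M$, and then to invoke uniqueness of solutions to that linear transport equation, given the standing smoothness assumptions. Since the equation is linear and first order, with the smooth velocity field $(v,a_t^M)$ on $\R^{2d}$, uniqueness identifies $\rho_t$ with $\operatorname{Law}(X_t^M,V_t^M)$ once the initial data agree.

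\textbf{Step 1: weak form on a single interval.} Fix $j$ and $t\in(t_j,t_{j+1})$. On this interval only the $j$-th summand of $a_t^M$ is active, since the interpolant is defined only through $(Z_{t_j},Z_{t_{j+1}},\xi_j)$; I read $A_t^{(j)}$ as vanishing off $[t_j,t_{j+1}]$. Take a test function $\varphi\in C_c^\infty(\R^{2d})$. Using $\dot X_t=V_t$ by \eqref{eq:phase_si_velocity} and $\dot V_t=A_t^{(j)}$ by \eqref{eq:phase_si_acceleration}, the chain rule gives
\begin{equation*}
\frac{\dd}{\dd t}\E[\varphi(X_t,V_t)]=\E\big[\nabla_x\varphi(X_t,V_t)\cdot V_t+\nabla_v\varphi(X_t,V_t)\cdot A_t^{(j)}\big].
\end{equation*}
Exchanging derivative and expectation is justified by the standing assumption that everything is smooth and integrable.

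\textbf{Step 2: condition on the current state.} Apply the tower property to the second term, conditioning on $(X_t,V_t)$. Because $\nabla_v\varphi(X_t,V_t)$ is measurable with respect to this conditioning, the term becomes $\E[\nabla_v\varphi(X_t,V_t)\cdot a_t^M(X_t,V_t)]$. Writing both terms as integrals against $\rho_t$ and integrating by parts yields
\begin{equation*}
\partial_t\rho_t+\nabla_x\cdot(v\rho_t)+\nabla_v\cdot(a_t^M\rho_t)=0
\end{equation*}
in the weak sense on each open interval $(t_j,t_{j+1})$.

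\textbf{Step 3: glue the intervals.} The paths $t\mapsto\rho_t$ must be continuous across the nodes $t_j$. At $t\to t_j^{\pm}$, the boundary conditions \eqref{eq:mean_boundary_conditions_1}, \eqref{eq:mean_boundary_conditions_2} and \eqref{eq:noise_boundary_conditions} force $Z_t\to Z_{t_j}$ almost surely from both adjacent intervals. The limit has law $\mu_{t_j}$ in both cases, since $\pi_{j-1,j}$ and $\pi_{j,j+1}$ share the marginal $\mu_{t_j}$. Hence $t\mapsto\E\varphi(Z_t)$ is continuous on $[0,1]$ and piecewise $C^1$. Integrating in time, $\rho_t$ is therefore a weak solution on all of $[0,1]$, with $\rho_0=\mu_{t_0}$.

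\textbf{Step 4: identify with the ODE.} The law of $(X_t^M,V_t^M)$ is the pushforward of $\mu_{t_0}$ by the flow of \eqref{eq:markovian_ode}, so it also solves the same equation with the same initial datum. Uniqueness of weak solutions for a Lipschitz (assumed smooth) field, via the characteristics/pushforward representation, then gives \eqref{eq:markovian_marginal_matching}. Evaluating at $t=t_j$ gives $\rho_{t_j}=\mu_{t_j}$.

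\textbf{Main obstacle.} I expect the delicate point to be the behaviour at the nodes. There the conditional law degenerates ($\Sigma_t\to0$), and $a_t^M$ may fail to be Lipschitz or even well defined, which threatens the uniqueness step. My first approach would be to apply the uniqueness argument interval by interval on $[t_j+\epsilon,t_{j+1}-\epsilon]$. I would then pass $\epsilon\to0$ using the continuity from Step 3, invoking the paper's blanket regularity assumption to guarantee that the ODE flow exists up to each node.
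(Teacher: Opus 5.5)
Your proposal is correct and follows the same route as the paper's proof: differentiate $\E[\varphi(Z_t)]$ on each interval, use the tower property to replace $A_t^{(j)}$ by $a_t^M(Z_t)$, read off the kinetic continuity equation, and conclude by uniqueness with the common initial law $\mu_{t_0}$. Your Step 3 and the $\epsilon$-truncation near the nodes add care the paper omits. They make explicit the gluing across each $t_j$, using that $\pi_{j-1,j}$ and $\pi_{j,j+1}$ share the marginal $\mu_{t_j}$, and they handle the possible degeneracy of $a_t^M$ at the nodes, which the paper absorbs into its blanket ``standard smoothness assumptions.''
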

\end{mdframed}
A short proof can be found in Appendix~\ref{app:kinetic_transport_pf}.

\subsection{On the choice of paths}
\label{sec:path_choices}
We first highlight that conditioned on the endpoints, our phase space interpolant $Z_t$ defined in \eqref{eq:zt_main} is distributed according to a Gaussian with mean $m_t$ via \eqref{eq:mean_eq} and covariance $\Sigma_t$ via \eqref{eq:matrix_noise_covariance}. We now describe some canonical choices for these objects. Throughout, we will consider an arbitrary interval $[t_j,t_{j+1}]$ and write everything in a reparameterization $u = (t-t_j)/\delta \in [0,1]$ where $\delta = t_{j+1} - t_j$. We also write $E_j = (Z_{t_j}, Z_{t_{j+1}})$.
\subsubsection{Choosing the mean path}
Recall that the position interpolant $I_t\in\R^d$ and its time derivative must satisfy the four constraints in~\eqref{eq:mean_boundary_conditions_1} and~\eqref{eq:mean_boundary_conditions_2}. A natural choice is the cubic Hermite polynomial, as for fixed endpoint positions and velocities, it is the unique polynomial of degree at most three satisfying all four boundary conditions:
\begin{equation}
\label{eq:hermite_mean}
\begin{aligned}
I_t(E_j) = H_t(E_j)&=(2u^3-3u^2+1)X_{t_j} +\delta(u^3-2u^2+u)V_{t_j} \\
&\phantom{=}+(-2u^3+3u^2)X_{t_{j+1}} +\delta(u^3-u^2)V_{t_{j+1}}\,.
\end{aligned}
\end{equation}
Its first and second time derivatives are available in closed form, giving the mean velocity and the deterministic component of the acceleration targets.

\subsubsection{Choosing the noise}
\label{sec:noise_choices}
The Gaussian covariance can also be significantly simplified by making particular choices. To start, we choose $G_t=\ell_t\otimes I_d$, $\ell_t\in\mathbb R^{1\times2}$
\begin{equation}
\label{eq:polar_l}
    \ell_t
    =
    \gamma_t
    \begin{pmatrix}
        \cos\theta_t & \sin\theta_t
    \end{pmatrix}\,,
\end{equation}
which implies that the covariance is
\begin{equation}
\label{eq:trig_covariance}
    \Sigma_t = \begin{pmatrix}
        \gamma_t^2
        &
        \gamma_t\dot\gamma_t
        \\
        \gamma_t\dot\gamma_t
        &
        \dot\gamma_t^2+\gamma_t^2\dot\theta_t^2
    \end{pmatrix} \otimes I_d\,.
\end{equation}
The parameter $\gamma_t$ controls position noise, while the changing angle supplies noise to the velocity component that is not determined by the positional noise alone. Taking $\theta_t = \omega t$ for $\omega > 0$, we can choose for instance
\begin{align*}
    \gamma_t \in \{u^2(1-u)^2\gamma\,, \gamma\}\,,
\end{align*}
for $\gamma > 0$. The first choice (bridge noise) satisfies the endpoint constraints, whereas constant noise instead convolves each endpoint law with a Gaussian. We emphasize that our approach is significantly more general than the choice of noise schedule in Acceleration Matching~\citep{dazzini2026trajectory}; see Appendix~\ref{sec:am_special_case} for more details on this point.

\section{Training and sampling}\label{sec:algorithms}
\subsection{Simulation-free training algorithm}\label{sec:sim_free_training}
In light of \eqref{eq:markovian_acceleration} being a valid acceleration field that generates the marginal data (recall Proposition~\ref{prop:kinetic_transport}), a natural algorithm is to regress a parametric neural network onto the accelerations $A_t^{(j)}$ from \eqref{eq:phase_si_acceleration}. Indeed, conditional expectation is the least-squares predictor of the pathwise acceleration from $(t,X_t,V_t)$, and so we can learn this field by regressing directly on sampled interpolant states and their accelerations. The resulting objective function of interest is then
\begin{equation}
\label{eq:acceleration_regression}
\!\!\!\!\!\!\mathcal L_{\mathrm{KI}}(\psi)=\sum_{j=0}^{J-1}\int_{t_j}^{t_{j+1}}\!\!\!\mathbb E\bigl[
 w(t)\|a_\psi(t,Z_t)-A_t^{(j)}\|^2\bigr] \dd t\,,
\end{equation}
where $w(\cdot)>0$ is a weight function, and we recall $A_t^{(j)}$ corresponds to the $j^{\rm th}$ interval. Notably, our algorithm is completely simulation-free.

Concretely, for a sampled interval $j \in \{0,1,\ldots,J-1\}$ and sampled time $t \sim {\rm Unif}(t_j,t_{j+1})$, we sample minibatches of endpoint pairs $(Z_{t_j},Z_{t_{j+1}}) \sim \pi_{j,j+1}$ from the prescribed coupling, where  $\pi_{j,j+1} \in \Gamma(\mu_{t_j},\mu_{t_{j+1}})$. We then compute $Z_t$ as in \eqref{eq:zt_main} following our choices for $I_t$ and $\Sigma_t$, as well as $A_t^{(j)}$, and perform gradient descent on the model parameters. See Algorithm~\ref{alg:kinematic_flow_matching} for a description.
\subsection{Learning stochastic dynamics}\label{sec:learning_stochastic}
\looseness=-1
Training with \eqref{eq:acceleration_regression} results in a sampling scheme that obeys the continuity equation. To introduce stochasticity in the sampling dynamics, we would instead have to obey \eqref{eq:background_kinetic_diffusion}, which requires learning the score function $s_t=\nabla_v\log\rho_t$. As with stochastic interpolants~\citep{albergo2025stochasticinterpolantsunifyingframework}, we also have a similar regression objective here.

Indeed, as the conditional law of $Z_t$ is given by
\begin{align*}
Z_t\mid E_j\sim\mathcal N\bigl(m_t(E_j),\Sigma_t\bigr)\,,\,\,
m_t(E_j)=\begin{pmatrix}I_t(E_j)\\\partial_tI_t(E_j)\end{pmatrix}\,,
\end{align*}
it follows that the conditional (velocity) score is
\begin{align*}
s_t^{v}(z;E_j)
&:=\nabla_v\log\rho_t(z\mid E_j) =-\begin{bmatrix}0&I_d\end{bmatrix}
\Sigma_t^{-1}\bigl(z-m_t(E_j)\bigr)\,.
\end{align*}
Since $\mathbb E[s_t^{v}(Z_t;E_j)\mid Z_t=z]
=\nabla_v\log\rho_t(z)$, a natural objective function emerges as before:
\begin{align*}
    \mathcal L_{\mathrm{SC}}(\phi)=\int_0^1 \mathbb E\!\left[
\lambda(t)\|s_\phi(t,Z_t)-s_t^{v}(Z_t;E_j)\|^2
\right]\dd t\,.
\end{align*}
\subsubsection{Minibatch training}
Per Algorithm~\ref{alg:kinematic_flow_matching}, after sampling a time $t$ and identifying an interval $j$, we draw samples from a coupling of $\mu_{t_j}$ and $\mu_{t_{j+1}}$. A first choice is the independent coupling, which is trivial to implement. A second is to consider a pairing induced by solving optimal transport problems on the minibatch, which has been shown to lower sampling costs in the two-marginal case~\citep{tong2024improvinggeneralizingflowbasedgenerative,pooladian2023multisample}. As our data takes the form of positions and velocities at the two end-points, we will always use the following cost function
\begin{align}\label{eq:ot_cost}
\begin{split}
c((x,v),(y,w)) =\frac{12}{\delta^3}\!\left\|y-x-\frac{\delta}{2}(v+w)\right\|^2\!\!\!+\! \frac{1}{\delta}\|w-v\|^2\,,
\end{split}
\end{align}
when computing the couplings, motivated by existing work on kinetic optimal transport~\citep{chen2018measure,
brigati2025kinetic}. We consider both pairwise OT couplings (using all available data prior to training), and minibatch OT couplings in our experiments.

\begingroup
\setlength{\algomargin}{2em}
\begin{algorithm}[t]
\SetAlgoCaptionSeparator{\ }
\caption{\textsc{Kinematic flow matching}}
\label{alg:kinematic_flow_matching}
\small
\DontPrintSemicolon
\SetKwInput{KwInput}{Input}
\SetKw{Return}{return}
\SetKwComment{Comment}{$\triangleright$\ }{}
\SetCommentSty{textit}
\SetNlSty{textnormal}{}{:\ }
\SetAlgoNlRelativeSize{0}
\KwInput{Coupling $\pi \in \Gamma(\mu_{t_0}, \ldots, \mu_{t_{J}})$, interpolant path $I_t$ and noise $G_t$}
\textbf{Initialize} acceleration network $a_\psi$\;
\While{{\rm training}}{
    Sample $t\sim\mathrm{Unif}(0,1)$\;
    Find $j$ such that $t\in(t_j,t_{j+1})$\;
    Draw $E_j=(Z_{t_j},Z_{t_{j+1}})\sim\pi_{j,j+1}$\;
    Draw $\xi\sim\mathcal N(0,I_{2d})$\; $X_t\gets I_t(E_j)+G_t\xi\,, V_t\gets\partial_t I_t(E_j)+\dot G_t\xi$\; $A_t\gets\partial_t^2 I_t(E_j)+\ddot G_t\xi$\; $\widehat{\cal L}_{\mathrm{KI}}\gets w(t)\|a_\psi(t,X_t,V_t)-A_t\|^2$\;$\psi\gets \texttt{optim}(\psi,\nabla_\psi{\widehat{\cal L}}_{\mathrm{KI}}(\psi))$\;
}
\Return $a_\psi\,.$\;
\end{algorithm}
\endgroup

\subsection{Sampling}
We now describe two sampling methods.

\looseness=-1
\paragraph{Deterministic sampling.}
We first consider discretizations of {\eqref{eq:markovian_ode}}, where we replace $a_t$ by its learned representation $a_\psi$. We initialize the dynamics with $(X_{0},V_{0})\sim\mu_{t_0}$ and numerically integrate \eqref{eq:markovian_ode} to $t_J$. For simplicity, we consider a standard discretization
\begin{align*}
    X_{(k+1)h} &= X_{kh}+h\,V_{kh}\,,\\
    V_{(k+1)h} &= V_{kh}+h \,a_\psi(kh,X_{kh},V_{kh})\,,
\end{align*}
for steps {$k=0,1,\ldots,K-1$}, and time step $h=1/K$.\looseness-1

\looseness=-1
\paragraph{Stochastic sampling.}
If we also trained a parametric model for the score $s_\phi$, then we can instead consider a discretization of \eqref{eq:background_kinetic_diffusion} with the drift in \eqref{eq:background_score_corrected_drift} that looks like
\begin{equation}
\label{eq:kinetic-sde-sampler}
\begin{split}
X_{(k+1)h} &= X_{kh}+hV_{kh}\,,\\
V_{(k+1)h} &= V_{kh}+h a_\psi(kh,X_{kh},V_{kh}) +\frac{h\beta_{kh}}{2}s_\phi(kh,X_{kh},V_{kh}) +\sqrt{h\beta_{kh}}\,\eta_k\,,
\end{split}
\raisetag{40pt}
\end{equation}
where $\eta_k \sim {\cal N}(0,I)$ are i.i.d.~and $\beta_t\geq0$ is a fixed sampling noise schedule.

\begin{table}[t]
\centering
\begingroup
\setlength{\tabcolsep}{6pt}
\renewcommand{\arraystretch}{1.10}
{\resizebox{\textwidth}{!}{\begin{tabular}{@{}lcccccc@{}}
& \multicolumn{4}{c}{Interpolation (small GoM)}
& \multicolumn{2}{c}{Forecast (large GoM)} \\
\cmidrule(lr){2-5}
\cmidrule(l){6-7}
Method & $t_1$ & $t_3$ & $t_5$ & $t_7$
& $t_{10}$ (from $t_0$) & $t_{10}$ (from $t_{9}$) \\
\midrule
\textsc{Vanilla-SB}{*} \citep{shen2025multimarginalschrodingerbridgesiterative}
& \vscore{0.112}{0.061} & \vscore{0.178}{0.073}
& \vscore{0.262}{0.043} & \vscore{0.305}{0.032}
& \textemdash & \textemdash \\
OT-CFM{*} \citep{tong2024improvinggeneralizingflowbasedgenerative}
& \vscore{0.148}{0.004} & \vscore{0.227}{0.008}
& \vscore{0.191}{0.012} & \vscore{0.250}{0.018}
& \textemdash & \textemdash \\
OT-MFM{*} \citep{kapusniak2024metricflowmatchingsmooth}
& \vscore{0.107}{0.014} & \vscore{0.056}{0.014}
& \vscore{0.052}{0.011} & \vscore{0.070}{0.021}
& \textemdash & \textemdash \\
WLM{*} \citep{guan2026lagrangianactionlearningpopulation}
& \vbest{0.039}{0.002} & \vscore{0.028}{0.000}
& \vscore{0.083}{0.002} & \vbest{0.050}{0.004}
& \vscore{0.180}{0.038} & \vscore{0.065}{0.005} \\
\midrule
OT-KI (Ours)
& \OurTtwo & \OurTfour & \OurTsix & \OurTeight
& \OurFromFirst & \OurFromLast \\
IND-KI (Ours)
& \OurIndependentTtwo & \OurIndependentTfour & \OurIndependentTsix & \OurIndependentTeight
& \OurIndependentFromFirst & \OurIndependentFromLast \\
\end{tabular}}}
\endgroup
\caption{Averaged $W_1$ distance for the GoM dataset; lower is better (*Prior results taken from~\cite{guan2026lagrangianactionlearningpopulation})}\label{tab:oceans-comparison}
\end{table}
\begin{figure}[tbp]
    \centering
    \includegraphics[width=\linewidth]{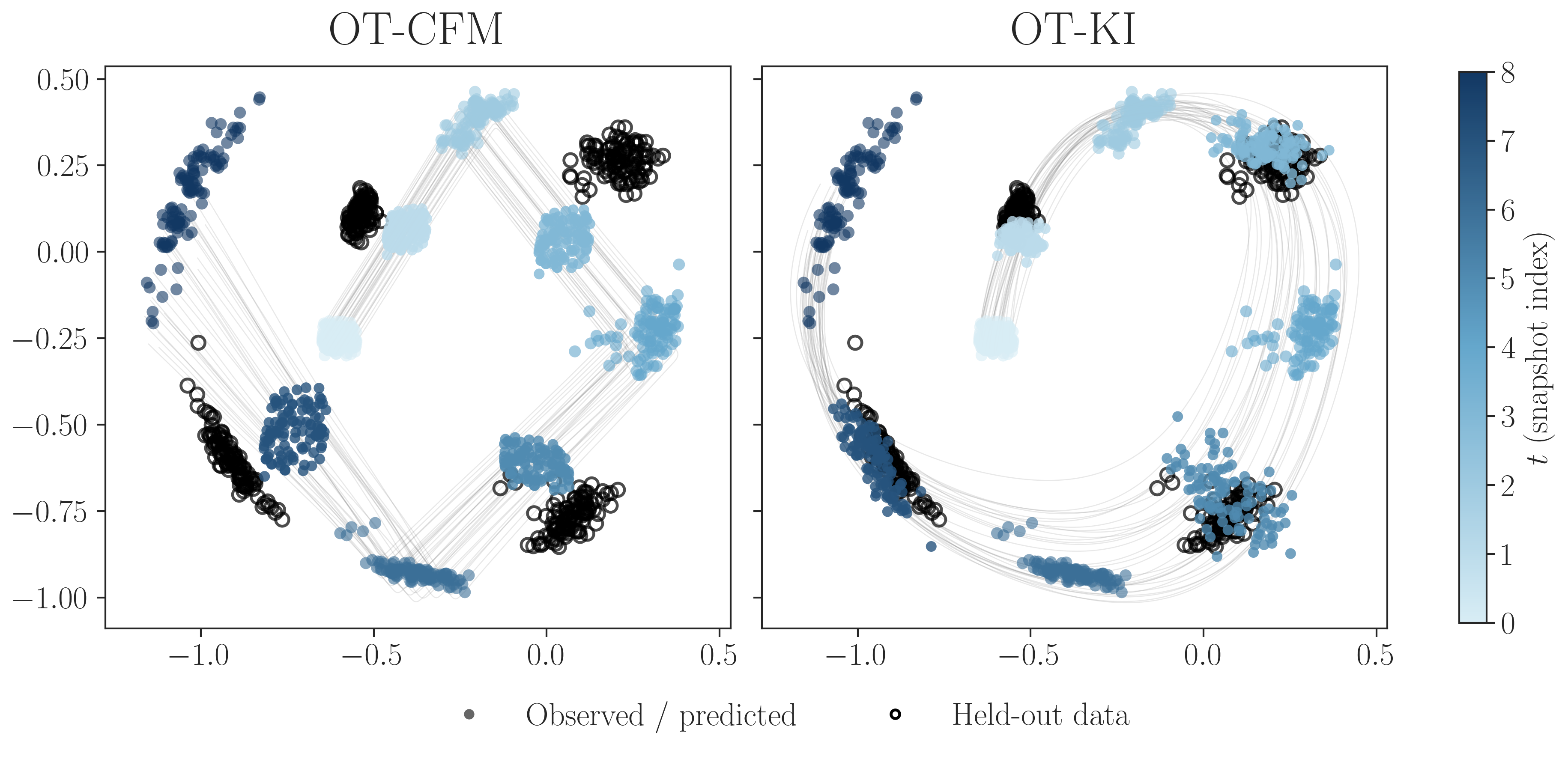}
    \caption{Trajectory rollouts for GoM (small vortex).}
    \label{fig:oceans-otcfm-otki}
\end{figure}

\begin{figure}
    \centering
    \includegraphics[width=1\linewidth]{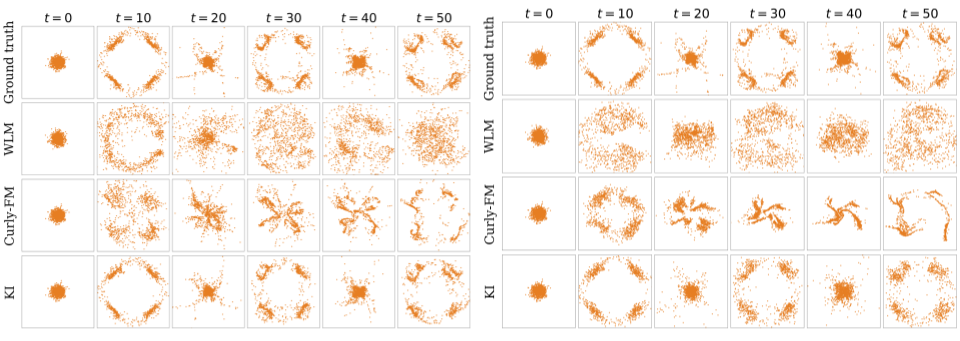}
    \caption{\textbf{Left:} Boids trained on dense snapshots. \textbf{Right:} Boids trained on sparse snapshots.}
    \label{fig:boids}
\end{figure}

\section{Numerical Experiments}\label{sec:experiments}
In this section we evaluate the performance of KI on three benchmark datasets from the literature. Among the methods we compare to, our closest competitors are Curly-FM~\citep{petrovic2025curlyflowmatchinglearning}, WLM~\citep{guan2026lagrangianactionlearningpopulation}, and OT-CFM~\citep{tong2024improvinggeneralizingflowbasedgenerative,pooladian2023multisample}. Omitted experimental details, including hyperparameters, are found in Appendix~\ref{app:exp_details}. Our GitHub repository can be found at \url{https://github.com/Submobile01/KinematicInterpolant}.
\subsection{Vortex datasets}
We consider two ocean vortex datasets derived from the HYbrid Coordinate Ocean Model Gulf of Mexico dataset from~\cite{shen2025multimarginalschrodingerbridgesiterative}, also known as GoM. We follow the setup of \cite{guan2026lagrangianactionlearningpopulation} and compare KI trained with minibatch optimal-transport coupling (OT-KI) and independent coupling (IND-KI). Table~\ref{tab:oceans-comparison} reports the $W_1$ distance between predicted and held-out positional distributions, with means and standard deviations over three seeds.

\subsubsection{Small vortex (interpolation)}
For the small vortex, we use the preprocessing of \citet{petrovic2025curlyflowmatchinglearning} and evaluate interpolation at the four held-out times $t_1,t_3,t_5,t_7$. Figure~\ref{fig:oceans-otcfm-otki} illustrates the benefit of incorporating velocities: OT-CFM produces approximately straight connections between observed snapshots, cutting across the vortex, whereas OT-KI follows curved trajectories that more closely align with the held-out distributions. Both KI variants improve on OT-CFM and OT-MFM at all four held-out times, with IND-KI achieving the lowest mean $W_1$ at $t_3$ and $t_5$ in Table~\ref{tab:oceans-comparison}. Averaged over the four times, IND-KI significantly reduces the error relative to WLM, which is the strongest baseline under this average.

\subsubsection{Large vortex (forecasting)}
The large vortex dataset follows the preprocessing of \citet{berlinghieri2025ohsnapmmdforecastingstochastic}. We predict the positional distribution at $t_{10}$, starting from either $t_0$ or $t_{9}$, as reported in the forecasting columns of Table~\ref{tab:oceans-comparison}. {OT-KI achieves the lowest mean $W_1$ for both forecasting initializations. Relative to WLM, the accuracy improves by approximately {$14\%$} when starting from $t_0$, and by over $55\%$ when starting from $t_9$.}

\subsection{Boids}
\looseness=-1
The Boids dataset is a two-dimensional synthetic benchmark of collective bird motion from \citet{guan2026lagrangianactionlearningpopulation}. We follow their data-generation procedure to simulate trajectories over $[0,50]$, focusing on interpolation rather than forecasting. We consider a short time horizon, $[0,24.5]$, and a long time horizon, $[0,50]$. For the dense experiments, we use snapshot spacings of $\delta t=0.5$ and $\delta t=1$, respectively, yielding approximately the same number of training marginals across horizons. In the sparse regime, we use a snapshot spacing of $\delta t=5$ for both horizons.
\begin{figure}
    \centering
    \includegraphics[width=0.7\linewidth]{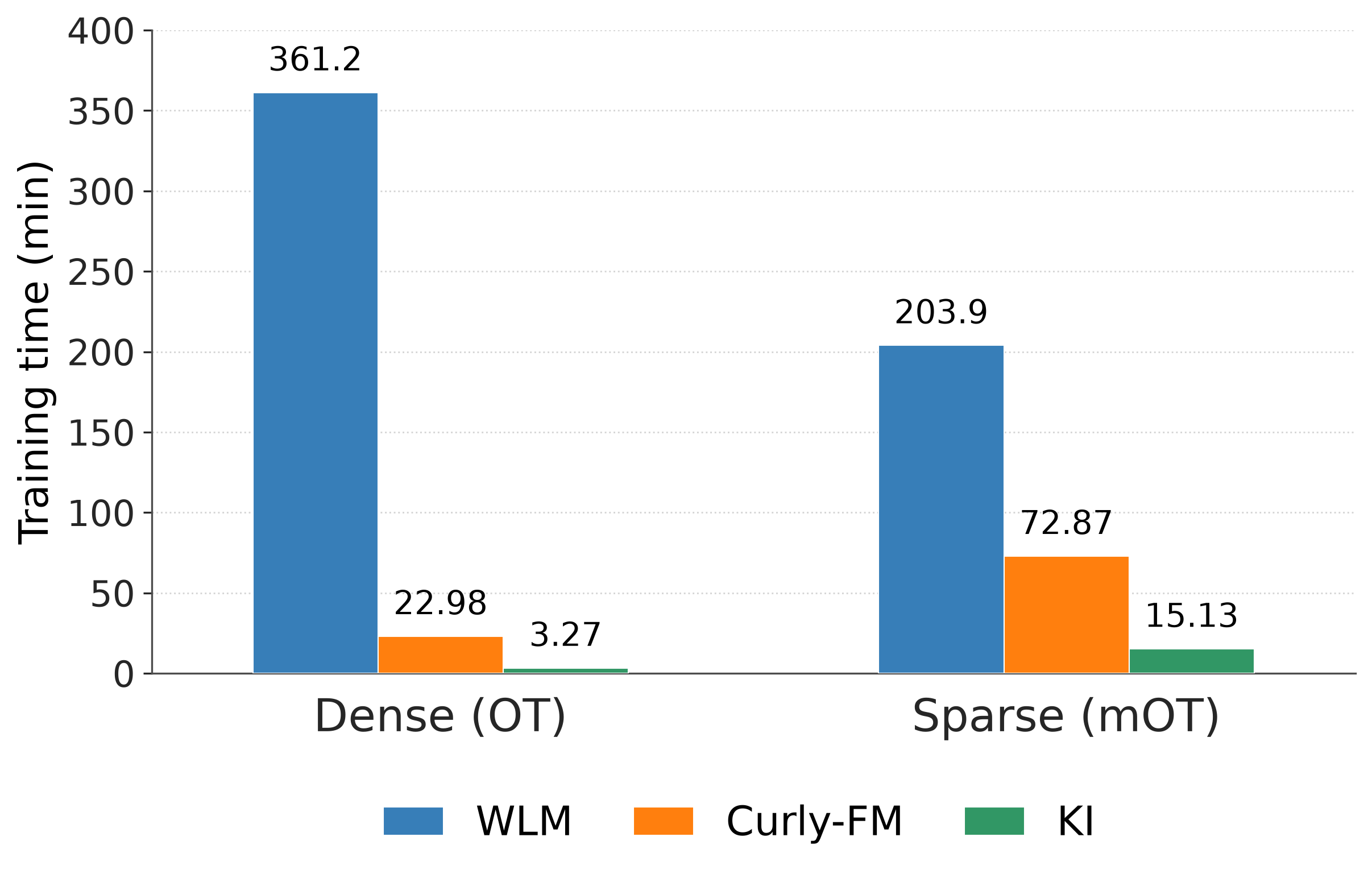}
    \caption{Training time for long-horizon Boids experiments on a single NVIDIA L40S GPU. Both Curly-FM and KI use precomputed full-population OT in the dense regime (51 training marginals) and per-step minibatch OT (mOT) in the sparse regime (11 training marginals). More details on training budgets are listed in Table~\ref{tab:boids_train_times_steps}.}
    \label{fig:training_time}
\end{figure}

\begin{figure}[ht]
    \centering
    \includegraphics[width=\linewidth]{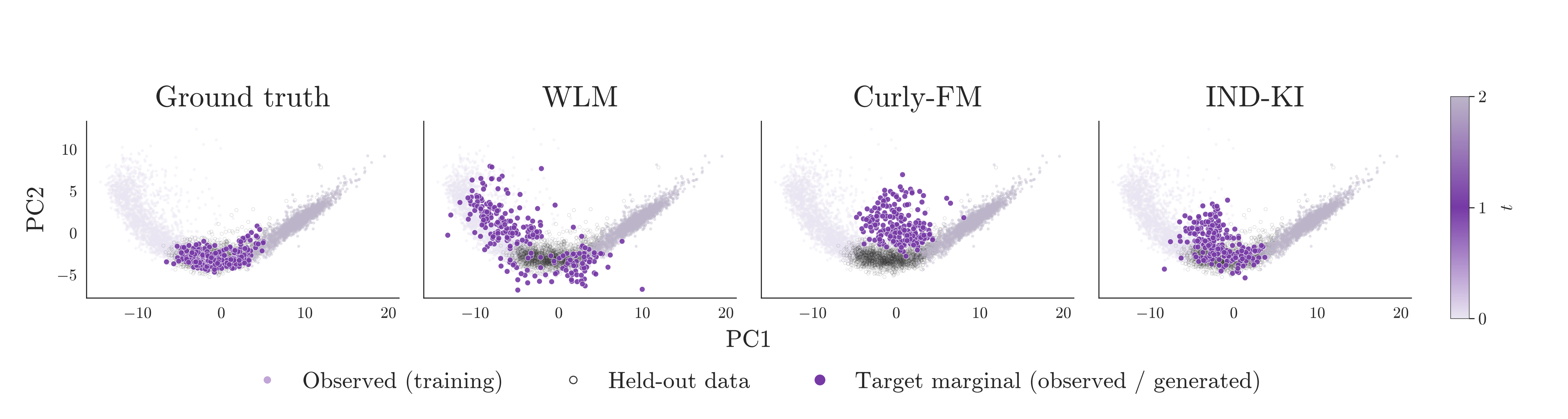}
    \caption{First two principal components of the generated marginal for 20-d mouse erythroid, comparing constant noise schedule KI using independent couplings with the ground truth and other methods.
    }
    \label{fig:mouse}
\end{figure}

\begin{table}[ht]
\centering
\small
\setlength{\tabcolsep}{4pt}
\resizebox{\linewidth}{!}{\begin{tabular}{lcccccc}
& \multicolumn{2}{c}{$d=2$}
& \multicolumn{2}{c}{$d=20$}
& \multicolumn{2}{c}{$d=50$} \\
\cmidrule(lr){2-3}
\cmidrule(lr){4-5}
\cmidrule(lr){6-7}
Method
    & CosSim $\downarrow$ & $W_2 \downarrow$
    & CosSim $\downarrow$ & $W_2 \downarrow$
    & CosSim $\downarrow$ & $W_2 \downarrow$  \\
\midrule
OT-CFM
    & $0.119 \pm 0.001$ & $0.611 \pm 0.006$
    & $0.326 \pm 0.007$ & $6.268 \pm 0.052$
    & $0.506 \pm 0.005$ & $7.888 \pm 0.035$ \\
Curly-FM
    & $\mathbf{0.009 \pm 0.000}$ & $0.369 \pm 0.090$
    & $0.335 \pm 0.001$ & $5.891 \pm 0.003$
    & $0.574 \pm 0.001$ & $7.733 \pm 0.011$ \\
WLM
    & $0.038 \pm 0.003$ & $0.416 \pm 0.026$
    & $0.373 \pm 0.067$ & $7.433 \pm 0.383$
    & $0.522 \pm 0.041$ & $9.011 \pm 0.173$ \\
\midrule
KI-bridge
    & $0.062 \pm 0.003$ & $0.288 \pm 0.038$
    & $0.267 \pm 0.014$ & $5.490 \pm 0.203$
    & $0.409 \pm 0.016$ & $7.130 \pm 0.060$ \\
KI-const
    & $0.072 \pm 0.003$ & $\mathbf{0.253 \pm 0.006}$
    & $\mathbf{0.234 \pm 0.003}$ & $\mathbf{5.271 \pm 0.128}$
& $\mathbf{0.373 \pm 0.009}$ & $\mathbf{6.804 \pm 0.007}$\\

\multicolumn{7}{r}{} \\
\end{tabular}}
\caption{Hold-out error for the mouse erythroid dataset in varying dimension.
}
\label{tab:erythroid-results}
\end{table}

\subsubsection{Dense snapshots with pre-computed OT}
For the dense experiments, we precompute OT couplings between neighboring snapshots using all $1000$ particles and the cost in~\eqref{eq:ot_cost}. We use KI with the bridge noise schedule, denoted KI-bridge. Table~\ref{tab:boids-wlm-am} shows that KI-bridge achieves the lowest mean $W_1$ and $W_2$ errors over both horizons, where its advantage over WLM is particularly pronounced over the long horizon.

Figure~\ref{fig:training_time} compares training times over the long horizon. In the dense regime, KI is approximately $100\times$ faster than WLM and $7\times$ faster than Curly-FM. In the sparse regime, where both KI and Curly-FM recompute minibatch OT couplings at each iteration, KI is approximately $13\times$ faster than WLM and $5\times$ faster than Curly-FM. Training budgets and timings of all regimes are reported in Table~\ref{tab:boids_train_times_steps} in Appendix~\ref{app:exp_details_boid}.
\looseness=-1
\subsubsection{Sparse snapshots with minibatch OT}
To assess performance when full-data couplings are unavailable or impractical, we evaluate minibatch OT with a batch size of $256$. Minibatches can produce mismatched endpoint pairs, whose positional discrepancies induce acceleration contributions that scale as the inverse square of the time gap. This motivates using sparser training grids, whose larger time gaps mitigate this amplification.

In this setting, we use KI-const with a constant noise schedule and recompute OT couplings within each minibatch at every update. We retain the same evaluation snapshots as in the dense experiments. Table~\ref{tab:boids-sparse} shows that, although errors increase relative to the dense-grid KI results, KI-const achieves the lowest mean $W_1$ and $W_2$ errors among the sparsely trained methods over both horizons, with a larger advantage over the baselines on the longer horizon.
\subsection{Mouse Erythroid}
The Mouse Erythroid dataset records cell differentiation along the erythroid lineage. Starting from the preprocessed data of \citet{petrovic2025curlyflowmatchinglearning}, we evaluate the methods in two-dimensional UMAP coordinates and in $20$- and $50$-dimensional PCA coordinates. Figure~\ref{fig:mouse} visualizes the results projected onto the first two principal components. As shown in Table~\ref{tab:erythroid-results}, KI with both constant and bridge noise schedules achieves lower mean $W_2$ errors than all evaluated baselines in every representation, with KI-const performing best throughout.  Details of preprocessing and the CosSim evaluation are provided in Appendix~\ref{app:mouse_erythroid}.

\begin{table}[t]
\centering
\begin{tabular}{lcccc}
& \multicolumn{2}{c}{$[0,24.5]$ (dense)}
& \multicolumn{2}{c}{$[0,50]$ (dense)} \\
\cmidrule(lr){2-3}
\cmidrule(lr){4-5}
Method & $W_1 \downarrow$ & $W_2\downarrow$ & $W_1\downarrow$ & $W_2\downarrow$ \\
\midrule
WLM
    & $0.594 \pm 0.188$ & $0.816 \pm 0.209$
    & $1.932 \pm 0.529$ & $2.268 \pm 0.542$ \\
Curly-FM
    & $0.995 \pm 0.173$ & $2.424 \pm 0.255$
& $2.180 \pm 0.254$ & $3.195 \pm 0.818$ \\
KI-bridge
    & $\mathbf{0.292 \pm 0.033}$ & $\mathbf{0.476 \pm 0.032}$
    & $\mathbf{0.450 \pm 0.062}$ & $\mathbf{0.682 \pm 0.076}$
\end{tabular}
\caption{Boids interpolation error with dense observations and full-data OT for KI.}

\label{tab:boids-wlm-am}
\end{table}

\begin{table}[t]
\centering
\begin{tabular}{lcccc}
& \multicolumn{2}{c}{$[0,24.5]$ (sparse)}
& \multicolumn{2}{c}{$[0,50]$ (sparse)}  \\
\cmidrule(lr){2-3}
\cmidrule(lr){4-5}
Method & $W_1\downarrow$ & $W_2\downarrow$ & $W_1\downarrow$ & $W_2\downarrow$ \\
\midrule
WLM
    & $0.639 \pm 0.036$ & $0.870 \pm 0.019$
    & $1.813 \pm 0.494$ & $2.131 \pm 0.552$ \\
Curly-FM
    & $1.476 \pm 0.108$ & $2.490 \pm 0.199$
& $2.564 \pm 0.294$ & $2.923 \pm 0.366$ \\
KI-const
 & $\mathbf{0.444 \pm 0.066}$ & $\mathbf{0.684 \pm 0.098}$
& $\mathbf{0.524 \pm 0.025}$ & $\mathbf{0.789 \pm 0.050}$ \\
\end{tabular}
\caption{Boids interpolation error with sparse observations and using minibatch OT for KI. }
\label{tab:boids-sparse}
\end{table}

\section{Related work}\label{sec:related_work}
\looseness=-1
\paragraph{Learning population dynamics.}
It is common to apply a least-action principle in order to find a unique solution favored by nature, as in optimal transport~\citep{schiebinger2019optimal,tong2024improvinggeneralizingflowbasedgenerative} and Schr\"odinger bridges with a Brownian reference~\citep{tong2023simulation}, or assume that the population follows a Wasserstein
gradient flow~\citep{bunne2022proximal,terpin2024learning}. In these cases, the inferred velocity
field is the gradient of a potential. However, gradient fields are curl-free, and they are fundamentally incapable of representing rotational motion such as the flow around an ocean vortex or progression through the cell cycle~\citep{petrovic2025curlyflowmatchinglearning}.

One line of work learns population dynamics through energy functionals on the space of probability distributions equipped with the Wasserstein distance. JKOnet~\citep{bunne2022proximal} learns an energy functional by differentiating through JKO steps~\citep{jordan1998variational}. JKOnet$^*$~\citep{terpin2024learning} avoids differentiating through simulations and instead minimizes residuals of the discrete JKO optimality conditions. Wasserstein Residuals~\citep{heinonen2026wassersteinresidualslearninggradient} adopts a continuous-time residual formulation for learning Wasserstein gradient flows, which additionally enables simulation-free training.

Beyond first-order gradient flows, Wasserstein Lagrangian Mechanics (WLM)~\citep{guan2026lagrangianactionlearningpopulation} models interacting second-order dynamics through a (damped) Newtonian equation. WLM learns a population-level potential energy by differentiating through simulated trajectories to minimize discrepancies between predicted and observed marginals. Wasserstein Lagrangian Residuals~\citep{sergeev2026simulationfreelearningpopulationdynamics} extends the residual approach to Wasserstein Lagrangian dynamics, enabling simulation-free training.

\looseness=-1
\paragraph{Flow and bridge matching approaches.}
Several flow matching approaches adopt a two-stage procedure, first learning an interpolant or coupling between observed marginals and then training a flow through regression. Curly-FM~\citep{petrovic2025curlyflowmatchinglearning} also incorporates position--velocity observations; they construct a continuous reference velocity field from discrete observations using a prescribed kernel,
then learn an interpolant aligned with this field before performing flow matching.
KI instead imposes the observed endpoint velocities directly as boundary conditions on its interpolants.

Most other approaches use only positional observations. For instance, Metric Flow Matching~\citep{kapusniak2024metricflowmatchingsmooth} learns an interpolant from positional observations using a data-dependent metric.
\citet{wang2025optimaltransportdensitydependentcost} also discourage trajectories from passing through low-density regions by using a density-dependent transport cost.
\citet{schwerdtner2026twoparameterflowslearningpopulation} first propose to learn a flow from a common reference distribution to each positional marginal, conditioned on physical time. They then extract the induced physical-time velocity through regression, using derivatives of the learned transport with respect to physical time.
SBIRR~\citep{shen2025multimarginalschrodingerbridgesiterative} alternates between estimating a multi-marginal Schr\"odinger bridge under a reference process and refining the reference within a prescribed (parametric) family. They use a maximum-likelihood loss to fit simulated bridge trajectories.

Among position-only methods, Acceleration Matching (AM)~\citep{dazzini2026trajectory} also learns a Markovian acceleration field in phase space. AM samples the unobserved velocities at the observation times by conditioning its kinetic Brownian reference process on the positional observations, using closed-form Gaussian conditioning. Relatedly, TracingFlow~\citep{sun2026tracingflowsimulationfreetrajectoryinference} instead iteratively assigns velocities at observed positions using the conditional mean of the velocities of trajectories passing through those positions.

\looseness=-1
\paragraph{Higher-order generative modeling.}
Higher-order dynamics have also been explored for generative modeling. AGM~\citep{chen2024generativemodelingphasestochastic} constructs stochastic bridges in phase space from noise to data, using an auxiliary velocity variable to promote straighter trajectories. Its acceleration targets follow from a stochastic optimal-control formulation and the associated probability-flow dynamics. TADA~\citep{chen2025tadaimproveddiffusionsampling} further explores augmented dynamics for efficient generative sampling, enabling pretrained diffusion models to be reused within momentum-based ODE samplers without additional training. \citet{jiang2026kineflow} propose a second-order flow for time-series forecasting.

\section{Conclusion}
We introduced Kinematic Interpolants, a flexible framework for reconstructing population dynamics from unpaired, time-indexed snapshots of positions and velocities. By directly learning acceleration fields through simulation-free training, the framework accommodates multiple noise schedules. Experiments on ocean vortices, mouse erythroid differentiation, and collective bird motion demonstrate improved reconstruction accuracy and substantially lower training costs than the compared baselines.

Our current formulation does not explicitly encode physical priors, and our experiments largely focus on interpolation. Future work can incorporate physical structure, investigate time-independent acceleration fields for forecasting, explore mean interpolants beyond cubic Hermite polynomials, and consider latent (potentially learnable) noise schedules.

\subsubsection*{Acknowledgements}
The authors would like to thank the ``Diffusion Generative Modeling: Progress and Next Steps'' workshop hosted by the Simons Institute at UC Berkeley, and the Generative Models and Sampling workshop hosted by Microsoft Research New England, for creating stimulating environments that led to this collaboration. This material is based upon work supported by the U.S.
National Science Foundation Graduate Research
Fellowship under Grant No. 2141064. Any
opinions, findings, and conclusions or recommendations expressed in this material are those of the author(s) and do not necessarily reflect the views of the National Science Foundation.

\bibliographystyle{plainnat}
\bibliography{\arxivpath bibliography}

\clearpage
\appendix
\section{Proofs}
\subsection{Proof of Proposition~\ref{prop:kinetic_transport}}
\label{app:kinetic_transport_pf}
\begin{proof}
Consider a smooth test function $\varphi\in C_c^\infty(\mathbb R^{2d})$. By the chain rule and the definition of the conditional acceleration, differentiation in time on each interval $(t_j,t_{j+1})$ gives
\begin{align*}
\frac{\dd}{\dd t}\mathbb E[\varphi(Z_t)]
&=\mathbb E[\nabla_x\varphi(Z_t)\cdot V_t
+\nabla_v\varphi(Z_t)\cdot A_t^{(j)}]\\
&=\mathbb E\bigl[\nabla_x\varphi(Z_t)\cdot V_t
+\nabla_v\varphi(Z_t)\cdot\mathbb E\bigl[A_t^{(j)}\mid Z_t\bigr]\bigr]\\
&=\mathbb E[\nabla_x\varphi(Z_t)\cdot V_t
+\nabla_v\varphi(Z_t)\cdot a_t^M(Z_t)]\,.
\end{align*}
This implies that the interpolant density satisfies the continuity equation
\begin{equation}
\label{eq:kinetic_continuity}
\partial_t\rho_t+\nabla_x\cdot(v\rho_t)+\nabla_v\cdot(a_t^M\rho_t)=0\,.
\end{equation}

The ODE~\eqref{eq:markovian_ode} satisfies the same continuity equation with the same initial density, so uniqueness under standard smoothness assumptions gives~\eqref{eq:markovian_marginal_matching}. The endpoint constraints ensure that the interpolant density matches the known densities at the prescribed times, i.e., $\rho_{t_j}=\mu_{t_j}$ which proves the claim.
\end{proof}

\section{Acceleration Matching as a special case}\label{sec:am_special_case}

Fix an interval $[t_j,t_{j+1}]$ and write $\delta=t_{j+1}-t_j$ and $u=(t-t_j)/\delta$. We now briefly go over the construction from Acceleration Matching~\citep{dazzini2026trajectory}. In that work, the authors consider the (undamped) reference process
\begin{equation}
\label{eq:am_reference_sde}
    dX_t=V_t\,dt\,,
    \qquad
    dV_t=\sqrt{\epsilon}\,dB_t\,,
\end{equation}
where the idea is to condition on neighboring endpoints $E_j=(Z_{t_j},Z_{t_{j+1}})$ and regress onto the resulting kinetic Brownian bridges. (Crucially, their work assumes no velocity data, and so it must be sampled when training; see Section~\ref{sec:related_work} for more information.) The following proposition shows that our formulation recovers theirs.

\begin{mdframed}[style=MyFrame2]
\begin{proposition}[Recovering the AM bridge marginals]
\label{prop:am_si_equivalence}
Choose the Hermite interpolant $I_t=H_t$ and
\begin{equation*}
G_t=\sqrt{\epsilon}\,(\ell_t\otimes I_d)\,,\qquad
\ell_t={\gamma_t}\begin{pmatrix}\cos\theta_t&\sin\theta_t\end{pmatrix}\,,
\end{equation*}
with, for $0<u<1$,
\begin{equation}
\label{eq:am_rho_theta}
\begin{aligned}
\!\!\!\!\!\!\!\!\!{\gamma_t}=\frac{\delta^{3/2}}{\sqrt{3}}[u(1-u)]^{3/2}\,,\, \theta_t=\frac{\sqrt{3}}{2}\log\Bigl(\frac{u}{1-u}\Bigr)\,.
\end{aligned}
\end{equation}
Then the interpolant and the AM bridge \eqref{eq:am_reference_sde} have the same conditional marginals, i.e., for $t \in [t_j,t_{j+1}]$
\begin{equation}
\label{eq:conditional_marginal_equivalence}
\operatorname{Law}(Z_t\mid E_j)
=\operatorname{Law}(Z_t^{\mathrm{AM}}\mid E_j)\,.
\end{equation}
\end{proposition}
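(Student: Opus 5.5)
Both conditional laws are Gaussian on $\R^{2d}$, so the plan is to match means and covariances. On the interpolant side, conditioned on $E_j$, $Z_t$ is Gaussian with mean $m_t$ from \eqref{eq:mean_eq} (with $I_t=H_t$) and covariance $\epsilon\,\Sigma_t$ from \eqref{eq:trig_covariance}. On the AM side, \eqref{eq:am_reference_sde} is a linear Gaussian SDE, so conditioning on both endpoints $E_j$ gives a Gaussian bridge. We compute its mean and covariance by standard Gaussian conditioning and compare. By translation invariance in time we work on $[0,\delta]$ with $s=t-t_j=u\delta$. Since everything acts coordinatewise, we reduce to $d=1$ and tensor with $I_d$ at the end.

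\textbf{Step 1: the bridge mean.} The unconditioned transition of $(X,V)$ over a time $s$ has mean $\Phi_s z$ with $\Phi_s=\begin{pmatrix}1&s\\0&1\end{pmatrix}$. Its covariance is $\epsilon Q_s$ with
\begin{equation*}
Q_s=\begin{pmatrix}s^3/3&s^2/2\\ s^2/2&s\end{pmatrix}.
\end{equation*}
Rather than computing the conditional mean by brute force, we argue structurally. The bridge mean $\bar m(s)$ is the noise-free minimum-energy path, i.e.\ it minimizes $\int\|\ddot x\|^2$ subject to the four boundary conditions. Its position is therefore a cubic in $s$ matching $X_{t_j},V_{t_j},X_{t_{j+1}},V_{t_{j+1}}$, and its velocity component is the derivative of that cubic. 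By the uniqueness of the cubic Hermite interpolant noted after \eqref{eq:hermite_mean}, the mean equals $(H_t,\partial_tH_t)$. Equivalently, the conditional mean of $X_s$ is linear in the endpoints and satisfies the boundary conditions; one checks it is cubic via $\mathrm{Cov}(X_s,\cdot)$ being polynomial of degree at most 3 in $s$.

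\textbf{Step 2: the bridge covariance.} By the standard formula for Gauss--Markov bridges, the conditional covariance is
\begin{equation*}
C_s=\epsilon\bigl(Q_s-Q_s\Phi_{\delta-s}^\top Q_\delta^{-1}\Phi_{\delta-s}Q_s\bigr),
\end{equation*}
which does not depend on $E_j$. Equivalently, $C_s=\epsilon\,(Q_s^{-1}+\Phi_{\delta-s}^\top Q_{\delta-s}^{-1}\Phi_{\delta-s})^{-1}$. We compute the three entries of $C_s$ explicitly in terms of $u$ and $\delta$. The expected result is $C_{xx}=\epsilon\,\delta^3u^3(1-u)^3/3$, which matches $\epsilon\gamma_t^2$ for the $\gamma_t$ in \eqref{eq:am_rho_theta}.

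\textbf{Step 3: match the remaining entries.} We verify that $\gamma_t\dot\gamma_t=C_{xv}/\epsilon$, which is automatic once $C_{xv}=\tfrac12\partial_sC_{xx}$. This identity itself holds because $C_{xv}(s)=\mathrm{Cov}(X_s,\dot X_s\mid E_j)=\tfrac12\frac{\dd}{\dd s}\mathrm{Var}(X_s\mid E_j)$, the bridge position being differentiable with derivative $V_s$. It remains to show
\begin{equation*}
\dot\gamma_t^2+\gamma_t^2\dot\theta_t^2=C_{vv}/\epsilon.
\end{equation*}
With $\dot\theta_t=\frac{\sqrt3}{2\delta u(1-u)}$, the term $\gamma_t^2\dot\theta_t^2$ equals $\tfrac{\delta}{4}u(1-u)$. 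Also $\dot\gamma_t^2=\tfrac{3\delta}{4}u(1-u)(1-2u)^2$. The sum is $\tfrac{\delta}{4}u(1-u)\bigl[1+3(1-2u)^2\bigr]$, and we check that this equals the computed $C_{vv}$.

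\textbf{Main obstacle.} Obtaining closed forms for $C_{xv}$ and especially $C_{vv}$ from the $2\times2$ matrix inversions is the bulk of the computation. The shortcut above handles $C_{xv}$. For $C_{vv}$, I would use the precision-sum form in Step 2 and simplify with the symbolic factor $u(1-u)$, or compute $\mathrm{Var}(V_s\mid E_j)$ by Gaussian conditioning on the four scalar endpoint observations. Finally, since a Gaussian is determined by its mean and covariance, Steps 1--3 give \eqref{eq:conditional_marginal_equivalence} for $0<u<1$. At $u\in\{0,1\}$ both laws are the point mass at the endpoint, because $\gamma_t\to0$ and $\dot\gamma_t,\gamma_t\dot\theta_t\to0$.
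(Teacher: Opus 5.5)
Your proposal is correct and follows essentially the same route as the paper: you match the conditional Gaussian means and covariances, compute the bridge covariance with the same Gauss--Markov conditioning formula, and check $\gamma_t^2$, $\gamma_t\dot\gamma_t$ and $\dot\gamma_t^2+\gamma_t^2\dot\theta_t^2$ against its entries. Your deferred $C_{vv}/\epsilon$ does come out as $\delta u(1-u)(1-3u+3u^2)$, for instance via your precision-sum form, and this equals your target $\tfrac{\delta}{4}u(1-u)[1+3(1-2u)^2]$. The only differences save labor and are valid: you identify the mean through uniqueness of the cubic Hermite interpolant instead of computing the paper's conditioning matrix $C_t$, and you get the cross entry for free from $\mathrm{Cov}(X_t,V_t)=\tfrac12\partial_t\mathrm{Var}(X_t)$, which holds on both sides since $G_t\dot G_t^\top=\tfrac12\partial_t(G_tG_t^\top)$.
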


\end{mdframed}

\subsection{Proof of Proposition~\ref{prop:am_si_equivalence}}
\label{app:am_si_equivalence_pf}

Fix an interval $[t_j,t_{j+1}]$ and endpoints $E_j=(z_0,z_1)$.
Write $s=t-t_j$, $r=t_{j+1}-t$, and
$\delta=t_{j+1}-t_j=s+r$, so
$s=\delta u$ and $r=\delta(1-u)$ in \eqref{eq:am_rho_theta}.

\begin{lemma}[Gaussian law of the AM bridge]
\label{lem:am_bridge_law}
The conditional law of \eqref{eq:am_reference_sde} is
\begin{equation}
\label{eq:am_bridge_gaussian}
Z_t^{\mathrm{AM}}\mid E_j\sim
\mathcal N\!\left(
\begin{pmatrix}H_t(E_j)\\\partial_t H_t(E_j)\end{pmatrix},
\epsilon K_t^{\mathrm{AM}}\otimes I_d\right)\,,
\end{equation}
where
\begin{equation}
\label{eq:am_covariance}
K_t^{\mathrm{AM}}=
\begin{pmatrix}
\dfrac{s^3r^3}{3\delta^3} & \dfrac{s^2r^2(r-s)}{2\delta^3}\\[3mm]
\dfrac{s^2r^2(r-s)}{2\delta^3} & \dfrac{sr(s^2-sr+r^2)}{\delta^3}
\end{pmatrix}\,.
\end{equation}
\end{lemma}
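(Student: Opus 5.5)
The reference process \eqref{eq:am_reference_sde} is a linear SDE driven by Gaussian noise. Conditioning it on both endpoints therefore gives a Gaussian process, and I would compute its conditional law by Gaussian conditioning.

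The dynamics act coordinatewise in the same way, so the covariance has the form $K\otimes I_d$. It suffices to treat $d=1$ and set $\epsilon=1$ by scaling. I shift time so that $t_j=0$ and $t_{j+1}=\delta$.

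\textbf{Step 1: transition law.} Started from $z=(x,v)$, the process after time $\tau$ has mean $\Phi_\tau z$ with $\Phi_\tau=\begin{pmatrix}1&\tau\\0&1\end{pmatrix}$. Its covariance is
\[
Q_\tau=\int_0^\tau \begin{pmatrix}\sigma\\1\end{pmatrix}\begin{pmatrix}\sigma&1\end{pmatrix}\dd\sigma=\begin{pmatrix}\tau^3/3&\tau^2/2\\ \tau^2/2&\tau\end{pmatrix}.
\]

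\textbf{Step 2: Markov bridge.} By the Markov property, the density of $Z_t$ given $(z_0,z_1)$ is proportional to
\[
\mathcal N(z;\Phi_s z_0,Q_s)\,\mathcal N(z_1;\Phi_r z,Q_r).
\]
Completing the square in $z$ gives a Gaussian with precision $Q_s^{-1}+\Phi_r^\top Q_r^{-1}\Phi_r$. The conditional covariance is the inverse of this precision, which equals the standard form
\[
K=Q_s-Q_s\Phi_r^\top Q_\delta^{-1}\Phi_r Q_s.
\]
This form uses $Q_\delta=\Phi_r Q_s\Phi_r^\top+Q_r$. The conditional mean is
\[
m=\Phi_s z_0+Q_s\Phi_r^\top Q_\delta^{-1}\bigl(z_1-\Phi_\delta z_0\bigr).
\]

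\textbf{Step 3: verify the mean.} The position component of $m$ is a polynomial in $s$ of degree at most $3$, since all entries of $Q_s$ and $\Phi_r$ are polynomial in $s$ with $r=\delta-s$. The mean velocity of a linear system equals the time derivative of the mean position, so the velocity component of $m$ is the derivative of the position component. At $s=0$ and $s=\delta$ the mean equals $z_0$ and $z_1$ respectively, because the covariance there collapses to zero. So the position mean is a cubic matching both endpoint positions and velocities. By the uniqueness of the cubic Hermite interpolant noted after \eqref{eq:hermite_mean}, it equals $H_t(E_j)$.

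\textbf{Step 4: compute the covariance (main obstacle).} This is the step I expect to take the most work. I would use $Q_\delta^{-1}=\frac{12}{\delta^3}\begin{pmatrix}1&-\delta/2\\-\delta/2&\delta^2/3\end{pmatrix}$ and expand $K$ entrywise. Each entry is a polynomial in $s,r$ divided by $\delta^3$. Substituting $\delta=s+r$ throughout should simplify the entries to the claimed forms
\[
\frac{s^3r^3}{3\delta^3},\qquad \frac{s^2r^2(r-s)}{2\delta^3},\qquad \frac{sr(s^2-sr+r^2)}{\delta^3}.
\]

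\textbf{Cross-check.} As a check I would use a route that needs less algebra. Since $X_t$ is integrated Brownian motion, conditioning on endpoint values of $(X,B)$ gives $K_{11}$ through the known integrated-Brownian-bridge variance $s^3r^3/(3\delta^3)$. The off-diagonal entry should equal $\tfrac12\partial_s K_{11}$ with $r=\delta-s$. Indeed, $\tfrac12\partial_s\bigl(s^3r^3/(3\delta^3)\bigr)=s^2r^2(r-s)/(2\delta^3)$, which matches \eqref{eq:am_covariance} and serves as a consistency test. The entry $K_{22}$ must be computed directly from Step 2. I would test it at $u=\tfrac12$, where it should give $\delta/4$.
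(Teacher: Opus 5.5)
Your route is essentially the paper's. You use the same transition matrix and It\^o covariance $Q_\tau$, the same conditioning formulas for $m$ and $K$, and the same explicit $Q_\delta^{-1}$. You reach those formulas via the Markov product of densities plus Woodbury; the paper reaches them via the cross-covariance $Q(s)F(r)^\top$ with the final state.

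The one real difference is how you identify the mean. The paper computes $C_t=Q_s\Phi_r^\top Q_\delta^{-1}$ and reads off the Hermite coefficients. You instead argue three things:
\begin{itemize}
\item $m_x$ is a cubic in $s$;
\item $m_v=\partial_t m_x$, which is best justified by $X_t=X_{t_j}+\int_{t_j}^t V_\tau\,\mathrm{d}\tau$ pathwise, conditioning on the fixed $\sigma$-algebra generated by $E_j$;
\item $m(0)=z_0$ and $m(\delta)=z_1$.
\end{itemize}
You then invoke uniqueness of the cubic Hermite interpolant. This is cleaner and avoids algebra. Two small fixes: read the endpoint values directly off the formula, since your density derivation needs $0<s<\delta$. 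For the degree bound, say explicitly that the first row of $Q_s\Phi_r^\top$ is $(\delta s^2/2-s^3/6,\ s^2/2)$, rather than just ``polynomial''.

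Two corrections on the covariance.

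First, your proposed test of $K_{22}$ is wrong. At $u=\tfrac12$ the claimed entry equals $\frac{sr(s^2-sr+r^2)}{\delta^3}=\frac{\delta}{16}$, not $\frac{\delta}{4}$. The value $\delta/4$ is the variance of a Brownian bridge for $V$ pinned only at $v_0,v_1$. The position endpoints add the constraint $\int V=x_1-x_0$, which lowers the variance. So a correct expansion would appear to fail your check as written.

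Second, Step 4 is where the content of the lemma lies, and you only assert that it ``should simplify''; you need to carry it out. The efficient way is via $C_t$, exactly as the paper does. For $\delta=1$,
\begin{itemize}
\item $C_t=\begin{pmatrix}u^2(3-2u)&-u^2(1-u)\\6u(1-u)&u(3u-2)\end{pmatrix}$,
\item $\Phi_rQ_s=\begin{pmatrix}u^2/2-u^3/6&u-u^2/2\\u^2/2&u\end{pmatrix}$.
\end{itemize}
Then $K=Q_s-C_t\Phi_rQ_s$ gives $u^3(1-u)^3/3$, $u^2(1-u)^2(1-2u)/2$ and $u(1-u)(1-3u+3u^2)$. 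These match the claimed entries after restoring the powers of $\delta$. Your observation that $K_{12}=\tfrac12\partial_sK_{11}$ is a valid and useful consistency check.
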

\begin{proof}
Without loss of generality, it suffices to work in $d=1$. For all $h\geq 0$, integrating \eqref{eq:am_reference_sde} over an interval of length $h$ gives
\[
\begin{pmatrix}X_{t_j+h}\\V_{t_j+h}\end{pmatrix}
=\begin{pmatrix}X_{t_j}+hV_{t_j}\\V_{t_j}\end{pmatrix}
+\sqrt{\epsilon}\int_0^h\begin{pmatrix}h-q\\1\end{pmatrix}\dd W_q\,,
\]
where $W_q=B_{t_j+q}-B_{t_j}$ is a standard Brownian motion independent of the initial state. The deterministic term identifies the transition matrix
\begin{align*}
    F(h)= \begin{pmatrix}
    1 & h\\
    0 & 1
\end{pmatrix}\,,
\end{align*}
while It\^o's isometry gives
\begin{align*}
    Q(h)=\int_0^h
\begin{pmatrix}(h-q)^2&h-q\\h-q&1\end{pmatrix}\,dq
=\begin{pmatrix}h^3/3&h^2/2\\h^2/2&h\end{pmatrix}\,.
\end{align*}
The cross-covariance below follows by writing the final state as $F(r)Z_t$ plus an independent noise term.
Now given $Z_{t_j}=z_0$, the state at time $t$ has mean $F(s)z_0$, covariance $\epsilon Q(s)$, and cross-covariance $\epsilon Q(s)F(r)^\top$ with the final state. Performing another Gaussian conditioning with $Z_{t_{j+1}}=z_1$ then gives\looseness-1
\begin{align}
\label{eq:gauss_cond_mean}
m_t&=F(s)z_0+Q(s)F(r)^\top Q(\delta)^{-1}
\bigl(z_1-F(\delta)z_0\bigr)\,,\\
\label{eq:gauss_cond_cov}
K_t^{\mathrm{AM}}&=Q(s)-Q(s)F(r)^\top Q(\delta)^{-1}F(r)Q(s)\,.
\end{align}
To make the final calculation explicit, set $u=s/\delta$. The inverse endpoint covariance and conditioning matrix are
\begin{align*}
    Q(\delta)^{-1}=\begin{pmatrix}12/\delta^3&-6/\delta^2\\-6/\delta^2&4/\delta\end{pmatrix}\,,\qquad
C_t\defeq Q(s)F(r)^\top Q(\delta)^{-1}
=\begin{pmatrix}
u^2(3-2u)&-\delta u^2(1-u)\\
6u(1-u)/\delta&u(3u-2)
\end{pmatrix}\,.
\end{align*}
Writing $z_k=(x_k,v_k)$, the first component of $m_t$ is therefore
\begin{align*}
    (m_t)_x=(2u^3-3u^2+1)x_0+\delta(u^3-2u^2+u)v_0
+(-2u^3+3u^2)x_1+\delta(u^3-u^2)v_1\,,
\end{align*}
which is precisely the cubic Hermite interpolant $H_t(E_j)$. Meanwhile, the second row of the conditioning formula gives its time derivative (which uses that $\tfrac{\dd u}{\dd t}=\delta^{-1}$). For the covariance, substituting $C_t$ into $Q(s)-C_tF(r)Q(s)$ yields
\begin{align*}
    (K_t^{\mathrm{AM}})_{11}&=\frac{\delta^3}{3}u^3(1-u)^3\,,\\
(K_t^{\mathrm{AM}})_{12}&=\frac{\delta^2}{2}u^2(1-u)^2(1-2u)\,,\\
(K_t^{\mathrm{AM}})_{22}&=\delta u(1-u)(1-3u+3u^2)\,.
\end{align*}
Replacing $\delta u$ by $s$ and $\delta(1-u)$ by $r$ gives \eqref{eq:am_covariance}.
\end{proof}

\begin{lemma}[Noise factor for the AM covariance]
\label{lem:am_noise_factor}
The schedule \eqref{eq:am_rho_theta} and factor \eqref{eq:polar_l} satisfy
\begin{equation}
\label{eq:am_square_root_identity}
\Lambda_t\Lambda_t^\top=K_t^{\mathrm{AM}}\,,
\end{equation}
where $\Lambda_t\defeq\begin{pmatrix}\ell_t\\ \dot\ell_t\end{pmatrix}\in\R^{2\times2}$.
\end{lemma}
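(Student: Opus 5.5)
The plan is a direct entrywise verification of the $2\times2$ identity. First I will express $\Lambda_t\Lambda_t^\top$ in terms of $\gamma_t$ and $\theta_t$. Since $\ell_t=\gamma_t(\cos\theta_t,\ \sin\theta_t)$, we have $\dot\ell_t=\dot\gamma_t(\cos\theta_t,\ \sin\theta_t)+\gamma_t\dot\theta_t(-\sin\theta_t,\ \cos\theta_t)$. The two directions are orthonormal, so this reproduces the scalar factor in \eqref{eq:trig_covariance}:
\[
\Lambda_t\Lambda_t^\top=\begin{pmatrix}\gamma_t^2&\gamma_t\dot\gamma_t\\ \gamma_t\dot\gamma_t&\dot\gamma_t^2+\gamma_t^2\dot\theta_t^2\end{pmatrix}.
\]
It then suffices to match these three entries against the $u$-form of $K_t^{\mathrm{AM}}$ obtained at the end of the proof of Lemma~\ref{lem:am_bridge_law}.

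Next I will differentiate in $u$ and use $\tfrac{\dd u}{\dd t}=\delta^{-1}$. Write $c=\delta^{3/2}/\sqrt3$, so that $\gamma_t=c\,[u(1-u)]^{3/2}$. Then
\[
\dot\gamma_t=\frac{3c}{2\delta}[u(1-u)]^{1/2}(1-2u),\qquad
\dot\theta_t=\frac{\sqrt3}{2\delta}\Bigl(\frac1u+\frac1{1-u}\Bigr)=\frac{\sqrt3}{2\delta\,u(1-u)}.
\]
For the $(1,1)$ entry, $\gamma_t^2=\tfrac{\delta^3}{3}u^3(1-u)^3$, which matches $(K_t^{\mathrm{AM}})_{11}$. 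For the off-diagonal entry, $\gamma_t\dot\gamma_t=\tfrac{3c^2}{2\delta}u^2(1-u)^2(1-2u)=\tfrac{\delta^2}{2}u^2(1-u)^2(1-2u)$, which matches $(K_t^{\mathrm{AM}})_{12}$.

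The only step needing care is the $(2,2)$ entry, where the position-driven part and the rotational part must combine into the correct quadratic in $u$. I will compute the two pieces separately:
\[
\dot\gamma_t^2=\tfrac{3\delta}{4}u(1-u)(1-2u)^2,\qquad
\gamma_t^2\dot\theta_t^2=\tfrac{\delta^3}{3}u^3(1-u)^3\cdot\tfrac{3}{4\delta^2u^2(1-u)^2}=\tfrac{\delta}{4}u(1-u).
\]
Factoring out $\delta u(1-u)$ leaves $\tfrac14\bigl(3(1-2u)^2+1\bigr)=1-3u+3u^2$. This matches $(K_t^{\mathrm{AM}})_{22}$ and completes the identity for $0<u<1$.

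Finally, the factor $\epsilon$ and the Kronecker product with $I_d$ pass through unchanged. So $G_t G_t^\top$, $G_t\dot G_t^\top$ and $\dot G_t\dot G_t^\top$ assemble via \eqref{eq:matrix_noise_covariance} into $\epsilon K_t^{\mathrm{AM}}\otimes I_d$, which is what Proposition~\ref{prop:am_si_equivalence} needs. The singular behaviour of $\theta_t$ at $u\in\{0,1\}$ does no harm, because $\gamma_t^2\dot\theta_t^2$ stays bounded there.
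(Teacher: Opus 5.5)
Your proposal is correct and follows essentially the same route as the paper: you compute $\Lambda_t\Lambda_t^\top$ via the orthonormal pair $(\cos\theta_t,\sin\theta_t)$, $(-\sin\theta_t,\cos\theta_t)$, then verify the three entries by direct differentiation, and your $(2,2)$ computation combining $\dot\gamma_t^2$ and $\gamma_t^2\dot\theta_t^2$ into $\delta u(1-u)(1-3u+3u^2)$ checks out. The only difference is cosmetic: you differentiate in $u$ and match the $u$-form of $K_t^{\mathrm{AM}}$, whereas the paper writes $\gamma_t,\theta_t$ in terms of $s=t-t_j$ and $r=t_{j+1}-t$ and matches the $(s,r)$-form in \eqref{eq:am_covariance}.
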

\begin{proof}
{Let $e_t=(\cos\theta_t \,\, \sin\theta_t)$ and $e_t^\perp=(-\sin\theta_t \, \, \cos\theta_t)$. These are orthonormal, and
\begin{align*}
    \ell_t=\gamma_t e_t\,,\qquad
\dot\ell_t=\dot\gamma_t e_t+\gamma_t\dot\theta_t e_t^\perp\,,
\end{align*}
from which we can directly compute
\begin{align*}
\Lambda_t\Lambda_t^\top=
\begin{pmatrix}
\gamma_t^2&\gamma_t\dot\gamma_t\\
\gamma_t\dot\gamma_t&\dot\gamma_t^2+\gamma_t^2\dot\theta_t^2
\end{pmatrix}\,.
\end{align*}
In terms of the variables $s,r$, we have that the schedule is $\gamma_t=(sr)^{3/2}/(\sqrt{3}\,\delta^{3/2})$ and $\theta_t=(\sqrt{3}/2)\log(s/r)$, with $\dot s=1$ and $\dot r=-1$.}
Differentiating the schedule via chain rule then gives
\begin{align*}
    \dot\gamma_t=\frac{\sqrt{3sr}(r-s)}{2\delta^{3/2}}\,,\qquad
\dot\theta_t=\frac{\sqrt{3}\,\delta}{2sr}\,,
\end{align*}
and consequently,
\begin{align*}
    \gamma_t^2=\frac{s^3r^3}{3\delta^3}\,,\qquad
\gamma_t\dot\gamma_t=\frac{s^2r^2(r-s)}{2\delta^3}\,,\qquad
\dot\gamma_t^2+\gamma_t^2\dot\theta_t^2
=\frac{sr(s^2-sr+r^2)}{\delta^3}\,,
\end{align*}
which proves the identity.
\end{proof}
\begin{proof}[Proof of Proposition~\ref{prop:am_si_equivalence}]
Conditioned on $E_j$, the interpolant is Gaussian with mean $(H_t(E_j),\partial_tH_t(E_j))$ and covariance $\epsilon(\Lambda_t\Lambda_t^\top)\otimes I_d$. The two lemmas identify these with the mean and covariance of the AM bridge. This proves equality of the conditional laws in the interval interior; the endpoint extension gives the common prescribed endpoints.
\end{proof}

\subsection{Damped bridge mean and covariance}
\label{sec:damped_bridge}

The same construction applies to the damped kinetic reference process
\[
\mathrm dX_t=V_t\,\mathrm dt,\qquad
\mathrm dV_t=-\alpha V_t\,\mathrm dt
+\sqrt{\epsilon}\,\mathrm dB_t,
\qquad \alpha,\epsilon>0.
\]
Fix endpoints $E_j=(z_0,z_1)$, where
$z_0=(x_{t_j},v_{t_j})$ and
$z_1=(x_{t_{j+1}},v_{t_{j+1}})$, and write
$s=t-t_j$, $r=t_{j+1}-t$, and
$\delta=t_{j+1}-t_j=s+r$.
For each spatial coordinate, the transition matrix and
noise covariance over a duration $h$ are $F_\alpha(h)$
and $\epsilon Q_\alpha(h)$, with
\begin{align*}
F_\alpha(h)=
\begin{pmatrix}
1&\dfrac{1-e^{-\alpha h}}{\alpha}\\[2mm]
0&e^{-\alpha h}
\end{pmatrix}\,,\quad
Q_\alpha(h)=
\begin{pmatrix}
\dfrac{2\alpha h-3+4e^{-\alpha h}-e^{-2\alpha h}}
{2\alpha^3}
&
\dfrac{(1-e^{-\alpha h})^2}{2\alpha^2}\\[3mm]
\dfrac{(1-e^{-\alpha h})^2}{2\alpha^2}
&
\dfrac{1-e^{-2\alpha h}}{2\alpha}
\end{pmatrix}\,.
\end{align*}
Applying the Gaussian conditioning formulas
\eqref{eq:gauss_cond_mean}--\eqref{eq:gauss_cond_cov}
with $F_\alpha,Q_\alpha$ gives the coordinatewise mean
\begin{equation*}
m_\alpha(t)
=
F_\alpha(s)z_0
+
Q_\alpha(s)F_\alpha(r)^\top Q_\alpha(\delta)^{-1}
\bigl(z_1-F_\alpha(\delta)z_0\bigr)
\end{equation*}
where, as before, this can be expressed in closed-form, and the covariance $\epsilon K_\alpha(t)$ is given by
\begin{align*}
K_\alpha(t)
=
Q_\alpha(s)
-
Q_\alpha(s)F_\alpha(r)^\top Q_\alpha(\delta)^{-1}
F_\alpha(r)Q_\alpha(s).
\end{align*}

\section{Experimental Details}
\label{app:exp_details}
\paragraph{Noise scaling with interval length.}
For an observed interval of normalized duration $h$, we define
the position-noise envelopes
\begin{equation}\label{eq:noise_schedule_params}
a_{\mathrm{const}}(\tau)
= \frac{\sigma h^{3/2}}{8\sqrt{3}},
\qquad
a_{\mathrm{bridge}}(\tau)
= \frac{2\sigma h^{3/2}}{\sqrt{3}}\tau^2(1-\tau)^2,
\end{equation}
where $\tau\in[0,1]$ is local interval time and $\sigma\geq0$
controls the noise magnitude. This normalization matches the
constant noise amplitude to the peak bridge noise amplitude.
For both schedules, the rotation angle is
$\theta(\tau)=\omega\tau$, with $\omega>0$, and the position
and velocity noise standard deviations are
\begin{equation}
s_x(\tau)=a(\tau),
\qquad
s_v(\tau)=\frac{1}{h}
\sqrt{[a'(\tau)]^2+\omega^2a(\tau)^2}.
\end{equation}
For fixed hyperparameters, these scale as $h^{3/2}$ and
$h^{1/2}$, respectively. KI-bridge has zero position and velocity
noise at the endpoints, whereas KI-const retains nonzero
endpoint noise when $\sigma>0$.
\subsection{Oceans (Gulf of Mexico)}
\label{app:oceans}
\paragraph{{Data and preprocessing.}}
We use the two preprocessed Gulf of Mexico vortex datasets described in the main text from~\cite{shen2025multimarginalschrodingerbridgesiterative}. The small vortex contains nine snapshots of $111$ particles, and the large vortex contains eleven snapshots of $400$ particles; each particle has a two-dimensional position and velocity. For interpolation, we observe snapshots $t_0\,,t_2\,,t_4\,,t_6\,,t_8$ and hold out $t_1\,,t_3\,,t_5\,,t_7$. For forecasting, we observe $t_0\,,\ldots\,,t_9$ and hold out $t_{10}$. Time is rescaled so that the last observed snapshot is at $1$, and velocities are rescaled consistently with both space and time.

\paragraph{{Training.}}
Both KI variants use cubic Hermite mean paths and constant-amplitude rotating Gaussian noise. OT-KI recomputes minibatch OT couplings at each iteration whereas IND-KI pairs the two minibatches independently. We use separate networks to predict the acceleration field and the velocity score. We use exponentially averaged weights to improve stability of training; Table~\ref{tab:oceans_hyperparameters} summarizes the shared model and training settings.

OT-CFM starts at the observed positions and uses a position-only (with time embeddings) velocity network with the same hidden width, depth, and training budget. We use the noise level consistent with the constant noise schedule in~\eqref{eq:noise_schedule_params} with {$\sigma=1\,,\omega=\pi\sqrt{3}$}, $h=1/4$ for the small vortex and $h=1/9$ for the large vortex.

\paragraph{Evaluation.}
{Our results in Table~\ref{tab:oceans-comparison} use score-corrected SDE sampling with velocity diffusion coefficient of $\kappa=0.5$ (equivalently, $\beta_t=2\kappa=1$ in~\eqref{eq:kinetic-sde-sampler}), corresponding to score drift $\kappa s_t$ and Brownian amplitude $\sqrt{2\kappa}$. As a numerical integrator, we use the {Euler--Maruyama} method with $125$ steps per original snapshot interval. OT-CFM uses {Heun ODE sampling} with the same integration step size.}

Following~\cite{guan2026lagrangianactionlearningpopulation}, the small-vortex results in Table~\ref{tab:oceans-comparison} use separate rollouts from each preceding observed snapshot to the next held-out time, initialized at the observed positions and velocities without added noise. The large-vortex forecasts start at either $t_0$ or $t_9$ and run to $t_{10}$, with independent Gaussian perturbations of standard deviations $\sigma_x$ and $\sigma_v$ added at initialization. We compute the $1$-Wasserstein distance between the predicted and held-out position clouds in the original dataset coordinates, using all particles, and report means and sample standard deviations over three seeds.

\subsection{Boids}
\label{app:exp_details_boid}
\paragraph{Noise scaling.}
We use $\sigma(h)=\sigma_0\sqrt{h_{\mathrm{ref}}/h}$ to keep
velocity-noise amplitudes independent of interval length.
For KI-const, $(\sigma_0,\omega,h_{\mathrm{ref}})$ is
$(0.2,2,0.98)$ for the short full grid,
$(2/50^{3/2},\pi\sqrt{3},2)$ for the long full grid,
and $(0.1,2,0.98)$ for both sparse grids.
Here, interval durations are in physical time.
The SDE sampling amplitudes are $1$, $8/50^{3/2}$,
and $0.2$, respectively.
For the short-horizon KI-bridge preset, we use
$(\sigma_0,\omega,h_{\mathrm{ref}})=(0.3,10,0.98)$,
with envelope multiplier $1$ and SDE sampling amplitude $1$.
\paragraph{Hyperparameters.} We use zero friction for WLM and follow its recommended $30{,}000$ training steps on the full grids. All step budgets remain fixed across horizons. Regarding integration steps during training, WLM uses a fixed training integration step of $0.5$, requiring
1 and 2 steps per observed interval in the short- and long-horizon
full settings, respectively, and 10 in both sparse settings.

Curly-FM uses $5{,}000$ geometry-training steps followed by
$5{,}000$ flow-training steps. Both Curly-FM and KI use a batch size
of $256$ per training interval and include all intervals in each
optimization step. For KI, each step samples one time from every
interval. With interval weights matching the time-sampling
distribution, this estimator has the same expectation as the
single-time estimator in the original algorithm.

Both methods use
precomputed full-population OT in the dense regime and recompute
minibatch OT at each step in the sparse regime. For Curly-FM, these
couplings are used during flow training; dense couplings are computed
after freezing the geometry network, using the learned curve cost
averaged over eight time points per interval. Its velocity prior
uses $100$ nearest neighbors.

Table~\ref{tab:boids_train_times_steps} reports training budgets and wall-clock times. KI constructs cost matrices using a closed-form acceleration cost. Curly-FM instead evaluates and differentiates a learned geometry network and queries a nearest-neighbor velocity prior for each candidate endpoint pair, increasing the computational cost when the matrix is rebuilt at every flow-training step. The size of the geometry network therefore affects the runtime of both geometry learning and subsequent flow training.
\begin{table}[t]
\centering
\caption{Boids training budgets and wall-clock times on a single
NVIDIA L40S GPU and 8 CPUs, excluding unrelated setup and evaluation.
Curly-FM step counts combine geometry and flow training.
KI and Curly-FM timings include OT precomputation in the dense regime and
per-step minibatch OT in the sparse regime.}
\label{tab:boids_train_times_steps}
\resizebox{\linewidth}{!}{\begin{tabular}{lrrrrrrrr}
& \multicolumn{4}{c}{Dense}
& \multicolumn{4}{c}{Sparse} \\
\cmidrule(lr){2-5}\cmidrule(lr){6-9}
& \multicolumn{2}{c}{Short}
& \multicolumn{2}{c}{Long}
& \multicolumn{2}{c}{Short}
& \multicolumn{2}{c}{Long} \\
\cmidrule(lr){2-3}\cmidrule(lr){4-5}
\cmidrule(lr){6-7}\cmidrule(lr){8-9}
Method & Steps & Time (min) & Steps & Time (min)
       & Steps & Time (min) & Steps & Time (min) \\
\midrule
WLM
& $30,000$ & $276.3$
& $30,000$ & $361.2$
& $20,000$ & $107.8$
& $20,000$ & $203.9$ \\
Curly-FM
& $10,000$ & $22.44$ & $10,000$ & $22.98 $

& $10,000$ & $35.72 $ & $10,000$ & $72.87 $ \\
KI
& $20,000$ & $3.10$
& $20,000$ & $3.27$
& $20,000$ &  $7.86$
& $20,000$ & $15.13 $ \\
\end{tabular}}
\end{table}
\subsection{Mouse Erythroid}
\label{app:mouse_erythroid}
\paragraph{Noise parameters.}
For the PCA experiments, we use $\omega=6$ and set
$\sigma=0.2$ for KI-const and $\sigma=2$ for KI-bridge.
For the two-dimensional UMAP experiments, we use
$\omega=\pi\sqrt{3}$ and $\sigma=2$ for both schedules.
\paragraph{Velocity projections.}
We found that direct PCA projection empirically produced poorly balanced velocity components across the retained dimensions. We therefore use the graph-based velocity embedding of scVelo~\citep{Bergen820936}, setting \texttt{direct\_pca\_projection=False} in \texttt{scvelo.tl.velocity\_embedding}, whereas the Curly-FM setup uses \texttt{direct\_pca\_projection=True}. We rerun all baselines using the same graph-based projections to ensure consistent preprocessing across methods.

\paragraph{Velocity calibration.}
The embedded RNA velocities provide directional information, but their magnitudes require calibration to the time scale of the observed marginals \citep{lamanno2018rna, Bergen820936}. We apply a global velocity rescaling based on the displacement of the mean between training marginals per unit time.
\paragraph{RNA-velocity filtering.}
For cosine evaluation, we follow the RNA-velocity filtering approach of \citet{petrovic2025curlyflowmatchinglearning}, with a modified noise-mixing weight. Let $\bar v_k(x)$ denote the Gaussian-weighted average of the velocities at the $k$ nearest neighbors of $x$, with bandwidth equal to the distance to the farthest selected neighbor. The filtered velocity is
\[
\widetilde v(x)
=
(1-\alpha(x))\bar v_k(x)+\alpha(x)\varepsilon,
\qquad
\varepsilon\sim\mathcal{N}(0,0.1I_d),
\]
where $\alpha(x)=\sigma(3(d_{\min}(x)-5))$, $\sigma$ is the logistic sigmoid, and $d_{\min}(x)$ is the distance to the nearest observation. We set the distance threshold to $5$ and the sigmoid steepness to $3$, giving greater weight to noise farther from the observed data. Compared with the original mixing rule, $\alpha_{\mathrm{Curly}}(x)=\tfrac12+\tfrac12\sigma(100(d_{\min}(x)-0.2))$, which assigns at least half the weight to noise at every query point, our rule preserves more of the estimated velocity signal near the observed data while gradually attenuating it in more distant regions.

For the two-dimensional UMAP experiments, however, we omit noise mixing and use $\widetilde v(x)=\bar v_k(x)$, following \citet{petrovic2025curlyflowmatchinglearning}.

The cosine distance (CosSim on table \ref{tab:erythroid-results}) used by Curly-FM is
\[
D_{\mathrm{cos}}
=
\frac{1}{N(t_b-t_a)}
\sum_{i=1}^{N}\int_{t_a}^{t_b}
\left[1-\cos\!\left(\hat v_i(t),
\widetilde v(\hat x_i(t))\right)\right]\,dt,
\]
which is numerically approximated using the trapezoidal rule. We use $k=20$ and $[t_a,t_b]=[0,1]$ for 2D UMAP experiments, and $k=30$ and $[t_a,t_b]=[0,0.5]$ for 20D and 50D PCA experiments. Lower cosine distance implies better alignment.

\begin{table}[t]
\centering
\small
\caption{Training and held-out evaluation marginals for the Boids
experiments. Within each time range, both coupling strategies are
evaluated on the same test marginals. Each marginal contains
$N=1000$ particles.}
\begin{tabular}{lllc}
\toprule
Range & Coupling & Training times (s) & Number \\
\midrule
Short & Full-batch OT
      & $\{0,0.5,\ldots,24.5\}$ & 50 \\
Short & Minibatch OT
      & $\{0,5,10,15,20,24.5\}$ & 6 \\
Long  & Full-batch OT
      & $\{0,1,\ldots,50\}$ & 51 \\
Long  & Minibatch OT
      & $\{0,5,\ldots,50\}$ & 11 \\
\midrule
\multicolumn{2}{l}{Short-range test times}
      & $\{0.25,0.75,\ldots,24.25\}$ & 49 \\
\multicolumn{2}{l}{Long-range test times}
      & $\{0.5,1.5,\ldots,49.5\}$ & 50 \\
\bottomrule
\end{tabular}
\label{tab:boids_marginals}
\end{table}

\begin{table}[t]
\centering
\small
\resizebox{\linewidth}{!}{\begin{tabular}{lcccc}
\toprule
& \multicolumn{2}{c}{KI}
& \multicolumn{2}{c}{Curly-FM} \\
\cmidrule(lr){2-3}\cmidrule(lr){4-5}
Hyperparameter & Bridge & Constant & Dense & Sparse \\
\midrule
OT coupling
& Full-batch OT & Minibatch OT & Full-batch OT & Minibatch OT \\
OT coupling population
& $1000$ & $256$ & $1000$ & $256$ \\
Coupling construction
& Precomputed & Per update & Precomputed & Per update \\
Sampling method
& Score-corrected SDE & Score-corrected SDE & ODE & ODE \\
MLP hidden width
& $512$ & $512$ & $256$ & $256$ \\
MLP hidden depth
& $4$ & $4$ & $3$ & $3$ \\
Learning rate
& $10^{-3}$ & $3\times10^{-4}$ & $10^{-4}$ & $10^{-4}$ \\
\bottomrule
\end{tabular}}
\caption{Boids model and training settings.
Curly-FM width, depth, and learning rate apply to both
geometry and flow training. KI bridge settings correspond
to the 50-second long-poly run.}
\label{tab:boids_hyperparameters}
\end{table}

\begin{table}[t]
\centering
\small
{\begin{tabular}{lll}
\toprule
Hyperparameter & OT-KI & IND-KI \\
\midrule
Endpoint coupling & Minibatch OT & Independent \\
Coupling construction & Recomputed per update & Resampled per update \\
Endpoint batch size & $128$ & $128$ \\
Mean path & Cubic Hermite & Cubic Hermite \\
Noise schedule & Constant amplitude & Constant amplitude \\
Frequency parameter $\omega$ & $\pi\sqrt{3}$ & $\pi\sqrt{3}$ \\
{Sampling method} & {Score-corrected SDE} & {Score-corrected SDE} \\
{Sampling diffusion $\kappa$ ($\beta=2\kappa$)} & {$0.5$ ($1$)} & {$0.5$ ($1$)} \\
{Integrator / steps per snapshot interval} & {Euler--Maruyama / $125$} & {Euler--Maruyama / $125$} \\
Networks & Acceleration and scaled score & Acceleration and scaled score \\
MLP hidden width & $256$ & $256$ \\
MLP hidden depth & $2$ & $2$ \\
Activation & ReLU & ReLU \\
Training steps per seed & $10{,}000$ & $10{,}000$ \\
Optimizer & Adam & Adam \\
Learning rate & $10^{-3}$ & $10^{-3}$ \\
Score-loss weight & $1$ & $1$ \\
EMA decay & $0.99$ & $0.99$ \\
\bottomrule
\end{tabular}}
\caption{Model and training settings shared across both vortex datasets for GoM.}
\label{tab:oceans_hyperparameters}
\end{table}

\end{document}